\documentclass[10pt,twocolumn,letterpaper]{article}

\usepackage{cvpr}
\usepackage{times}
\usepackage{epsfig}
\usepackage{graphicx}
\usepackage{amsmath}
\usepackage{amssymb}
\usepackage{dsfont}
\usepackage{bm}
\usepackage{amsthm}
\usepackage{booktabs}
\usepackage[detect-all,per-mode=symbol]{siunitx}

\newcommand{\vect}[1]{\bm{\mathrm{#1}}} % 3D vectors (usually lowercase)
\newcommand{\vectwo}[1]{\bm{\mathrm{#1}}} % 2D vectors (usually lowercase)
\newcommand{\mat}[1]{#1}%\mathbf{#1}} % matrices (usually uppercase)

\newcommand{\liegroup}{\mathcal{L}}
\newcommand{\liealg}{\mathcal{A}}
\newcommand{\SO}{\mathit{SO}}
\newcommand{\SE}{\mathit{SE}}
\newcommand{\so}{\mathfrak{so}}

\newcommand{\liedim}{d}
\DeclareMathOperator{\Exp}{Exp}
\DeclareMathOperator{\Log}{Log}
\newcommand{\Jr}{\mat{J}_\text{r}} % right Jacobian
 % left Jacobian
\newcommand{\dif}{\mathrm{d}}
\newcommand{\Adj}[1]{\operatorname{Adj}_{#1}}
\newcommand{\R}{\mathbb{R}} % real numbers
\newcommand{\id}{\mathds{1}}
\newcommand{\norm}[1]{\|#1\|}
\newcommand{\ddj}[1]{\frac{\partial #1}{\partial\vect{d}_j}}

\newcommand{\blue}[1]{{\color{blue}#1}}

\newtheorem{theorem}{Theorem}[section]

\newtheorem{definition}[theorem]{Definition}

\newcommand{\labeleq}[1]{\ensuremath{\stackrel{\text{#1}}{=}}}

% Include other packages here, before hyperref.

% If you comment hyperref and then uncomment it, you should delete
% egpaper.aux before re-running latex.  (Or just hit 'q' on the first latex
% run, let it finish, and you should be clear).
% \usepackage[pagebackref=true,breaklinks=true,letterpaper=true,colorlinks,bookmarks=false]{hyperref}

\cvprfinalcopy % *** Uncomment this line for the final submission

\def\cvprPaperID{7746} % *** Enter the CVPR Paper ID here

% Pages are numbered in submission mode, and unnumbered in camera-ready
\ifcvprfinal\pagestyle{empty}\fi
\begin{document}

%%%%%%%%% TITLE
\title{Efficient Derivative Computation for Cumulative B-Splines on Lie Groups}

\author{Christiane Sommer$^{*}$ \quad Vladyslav Usenko$^{*}$ \quad David Schubert \quad Nikolaus Demmel \quad Daniel Cremers\\
Technical University of Munich\\
}

\maketitle
\thispagestyle{empty}

\ifcvprfinal
\let\thefootnote\relax\footnote{$^{*}$ These authors contributed equally. \\ This work was supported by the ERC Consolidator Grant ``3D Reloaded''.}
\fi

%%%%%%%%% ABSTRACT
\begin{abstract}
Continuous-time trajectory representation has recently gained popularity for tasks where the fusion of high-frame-rate sensors and multiple unsynchronized devices is required.
Lie group cumulative B-splines are a popular way of representing continuous trajectories without singularities. They have been used in near real-time SLAM and odometry systems with IMU, LiDAR, regular, RGB-D and event cameras, as well as for offline calibration.

These applications require efficient computation of time derivatives (velocity, acceleration), but all prior works rely on a computationally suboptimal formulation.
In this work we present an alternative derivation of time derivatives based on recurrence relations that needs $\mathcal{O}(k)$ instead of $\mathcal{O}(k^2)$ matrix operations (for a spline of order $k$) and results in simple and elegant expressions.
While producing the same result, the proposed approach significantly speeds up the trajectory optimization and allows for computing simple analytic derivatives with respect to spline knots.
The results presented in this paper pave the way for incorporating continuous-time trajectory representations into more applications where real-time performance is required.
\end{abstract}

%%%%%%%%% BODY TEXT

\begin{figure}
    \centering
    \includegraphics[width=\linewidth]{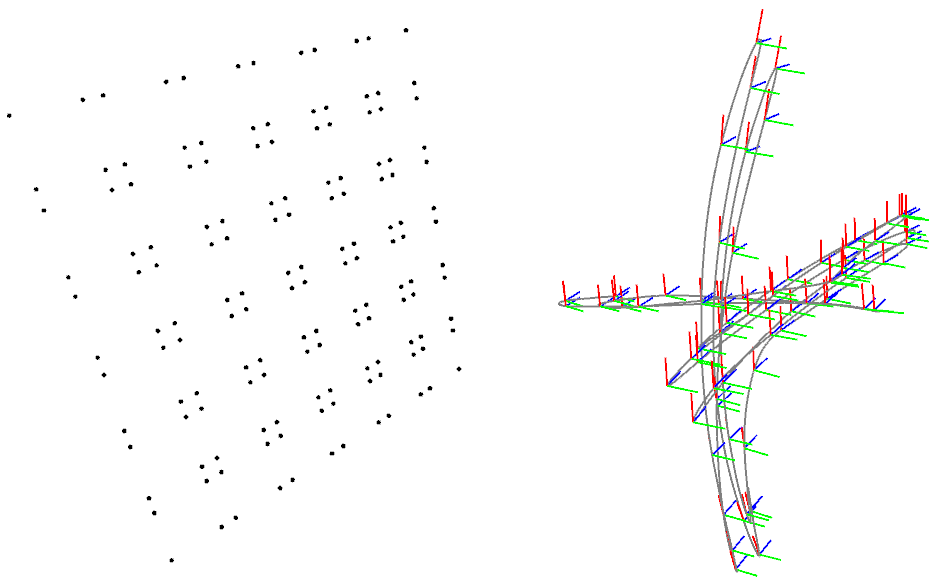} \\ \vspace{2mm}
    \includegraphics[width=\linewidth]{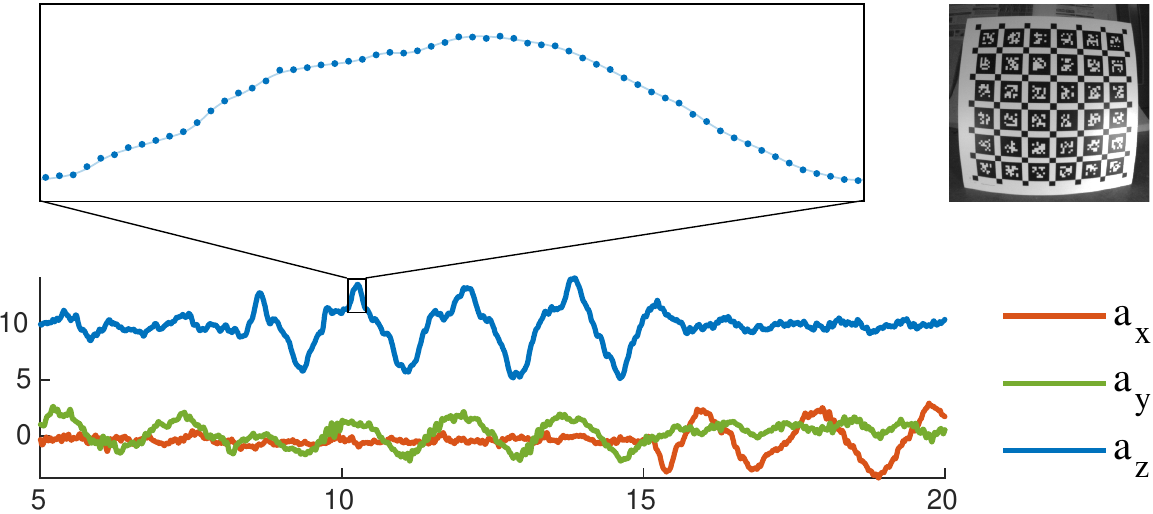}
    \caption{
        Camera-IMU calibration using a Lie group cumulative B-spline to represent the IMU trajectory (gray line with axes used to visualize rotation). Observations of the calibration pattern are combined with accelerometer and gyroscope measurements to estimate the trajectory and calibration parameters in a joint optimization. The plot visualizes the accelerometer measurements in \si{\meter\per\second\squared} (dots) overlaid on the continuous estimate generated from the spline trajectory (line) after optimization. As shown in the experimental section, the proposed formulation is able to significantly reduce the computational effort of such an optimization. 
    }
    \label{fig:traj_pattern}
    %\vspace{-5mm}
\end{figure}

%%%%%%%%%%%%%%%%%%%%%%%%%%%%%%%%%%%%%%%%%%%%%%%%%%
\section{Introduction}
%%%%%%%%%%%%%%%%%%%%%%%%%%%%%%%%%%%%%%%%%%%%%%%%%%

Estimating trajectories is a recurring topic in computer vision research:
In odometry and SLAM applications the sensor motion needs to be estimated, in object tracking and robotic grasping tasks, we want to compute the 6DoF pose over time, and for autonomous exploration, path planning and obstacle avoidance, we need to predict good trajectories.
Over the last years, researchers have increasingly switched to {\em continuous-time trajectories}: Instead of a simple list of poses for discrete time points, the trajectory is elegantly represented by a continuous function in time with values in the space of possible poses.
B-splines are a natural choice for parameterizing such functions. They have been used in several well-known works on continuous-time trajectory estimation.
However, since the B-spline trajectories take values in the Lie group of poses, the resulting differential calculus is much more involved than in Euclidean space $\R^\liedim$. Existing approaches to computing time derivatives suffer from a high computational cost that is actually quadratic in the spline order.

In this paper, we introduce recurrence relations for respective time derivatives and show how they can be employed to significantly reduce the computational cost and to derive concrete (analytic) expressions for the spline Jacobians w.r.t.\ the control points.
This is not only of theoretical interest:
by speeding up time derivatives and Jacobian computation significantly, we take a large step towards the real-time capability of continuous-time trajectory representation and its applications such as camera tracking, motion planning, object tracking or rolling-shutter modelling.
In summary, our contributions are the following:

\begin{itemize}
    \item A simple formulation for the time derivatives of Lie group cumulative B-splines that requires a number of matrix operation which scales linearly with the order $k$ of the spline.
    \item Simple (linear in $k$) analytic Jacobians of the value and the time derivatives of an $\SO(3)$ spline with respect to its knots.
    \item Faster optimization time compared to the currently available implementations, due to provably lower complexity. This is demonstrated on simulated experiments and real-world applications such as camera-IMU calibration.
\end{itemize}

%%%%%%%%%%%%%%%%%%%%%%%%%%%%%%%%%%%%%%%%%%%%%%%%%%
\section{Related Work}
%%%%%%%%%%%%%%%%%%%%%%%%%%%%%%%%%%%%%%%%%%%%%%%%%%

This paper consists of two main parts:
first, we take a detailed look at the theory behind B-splines, in particular on Lie groups.
In the second part, we look at possible applications of our efficient derivative computation in computer vision.
In the following, we will review related work for both parts.

\paragraph{B-splines in Lie groups}
% B-splines in general
Since the 1950s, B-splines have become a popular tool for approximating and interpolating functions of one variable.
Most notably, de Casteljau~\cite{casteljau-59}, Cox~\cite{cox1972numerical} and De~Boor~\cite{de1972calculating} introduced principled ways of deriving spline coefficients from a set of desirable properties, such as locality and smoothness.
Qin~\cite{qin00} found that due to their locality property, B-splines are conveniently expressed using a matrix representation.
By using so-called cumulative B-splines, the concept can be transferred from $\R^d$-valued functions to the more general set of Lie group-valued functions.
% B-splines in Lie groups
This was first done for the group of 3D rotations $\SO(3)$~\cite{kim95}, and later generalized to arbitrary Lie groups $\liegroup$~\cite{sommer2015continuous}.
The latter also contains formulas for computing derivatives of $\liegroup$-valued B-splines, but the way they are formulated is not practical for implementation.
For a general overview of computations in Lie groups and Lie algebras, we refer to~\cite{barfoot2017state, sola2018micro}.

\paragraph{Applications in computer vision}
Thanks to their flexibility in representing functions, B-splines have been used a lot for trajectory representation in computer vision and robotics.
The applications range from calibration~\cite{furgale12, lovegrove2013spline} to odometry estimation with different sensors~\cite{lovegrove2013spline, kerl15, mueggler18}, 3D reconstruction~\cite{ovren2018spline, ovren2019trajectory} and trajectory planning~\cite{usenko2017replanning, ding19}.
All of these works need temporal derivatives of the B-splines at some point, but to the best of our knowledge, there is no work explicitly investigating efficient computation and complexity of these.
Several works have addressed the question if it is better to represent trajectories as one spline in $\SE(3)$, or rather use a split representation of two splines in $\R^3$ and $\SO(3)$~\cite{haarbach2018survey, ovren2018spline, ovren2019trajectory}.
While this cannot be answered unambiguously without looking at the specific use case, all authors come to the conclusion that on average, using the split representation is better both in terms of trajectory representation and in terms of computation time.

%%%%%%%%%%%%%%%%%%%%%%%%%%%%%%%%%%%%%%%%%%%%%%%%%%
\section{Lie Group Foundations}
%%%%%%%%%%%%%%%%%%%%%%%%%%%%%%%%%%%%%%%%%%%%%%%%%%

%-------------------------------------------------
\subsection{Notation}
%-------------------------------------------------

A Lie group $\liegroup$ is a group which also is a differentiable manifold, and for which group multiplication and inversion are differentiable operations.
The corresponding Lie algebra $\liealg$ is the tangent space of $\liegroup$ at the identity element $\id$.
Prominent examples of Lie groups are the trivial vector space Lie groups $\R^\liedim$, which have $\liegroup=\liealg=\R^\liedim$ and where the group multiplication is simple vector addition, and matrix Lie groups such as the transformation groups $\SO(n)$ and $\SE(n)$, with matrix multiplication as group multiplication.
Of particular interest in computer vision applications are the groups $\SO(3)$ of 3D rotations and $\SE(3)$, the group of rigid body motions.

The continuous-time trajectories in this paper are functions of time $t$ with values in a Lie group $\liegroup$.
If $\liedim$ denotes the number of degrees of freedom of $\liegroup$, the hat transform $\cdot_\wedge\colon\R^\liedim\to\liealg$ is used to map tangent vectors to elements in the Lie algebra $\liealg$.
The Lie algebra elements can be mapped to their Lie group elements using the matrix exponential $\exp\colon\liealg\to\liegroup$, which has a closed-form expression for $\SO(3)$ and $\SE(3)$. The composition of the hat transform followed by the matrix exponential is given by
\begin{align}
    \Exp\colon\R^d\to\liegroup\,.
\end{align}
Its inverse is denoted
\begin{align}
    \Log\colon\liegroup\to\R^d\,,
\end{align}
which is a composition of the matrix logarithm $\log\colon\liegroup\to\liealg$ followed be the inverse of the hat transform $\cdot_\vee\colon\liealg\to\R^\liedim$.

\begin{definition}
    For an element $\mat{X}\in\liegroup$, the \emph{adjoint} $\Adj{\mat{X}}$ is the linear map defined by
\begin{align}
\label{eq:defadj}
    \Adj{\mat{X}}\vect{v} = (\mat{X}\vect{v}_\wedge \mat{X}^{-1})_\vee \qquad \forall\: \vect{v}\in\R^\liedim\,.
    \end{align}
\end{definition}

It follows readily from the definition in \eqref{eq:defadj} that %the following relations hold true for any $\vect{\xi}\in\R^\liedim$:
\begin{align}
\label{eq:adj}
    \mat{X}\Exp(\vect{v}) &= \Exp(\Adj{\mat{X}}\vect{v})\mat{X}\quad &\forall\: \vect{v}\in\R^\liedim\,, \\
    \Exp(\vect{v})\mat{X} &= \mat{X}\Exp(\Adj{\mat{X}^{-1}}\vect{v})\quad &\forall\: \vect{v}\in\R^\liedim\,.
\end{align}
For a rotation $\mat{R}\in\SO(3)$, the adjoint is simply $\Adj{\mat{R}}=\mat{R}$.
In this paper, we also use the commutator of two Lie algebra elements:
\begin{definition}
The \emph{commutator} is defined as
\begin{equation}
    \label{eq:comutator}
    \left[\cdot,\cdot\right]\colon \liealg\times\liealg\to\liealg,\quad \left[\mat{V},\mat{W}\right]=\mat{V}\mat{W}-\mat{V}\mat{W}\,.
\end{equation}
\end{definition}
For $\mat{V}=\vect{v}_\wedge, \mat{W}=\vect{w}_\wedge\in\so(3)$, the commutator has the property
\begin{equation}
    \left[\mat{V},\mat{W}\right]_\vee = \vect{v}\times\vect{w}\,.
\end{equation}

%-------------------------------------------------
\subsection{Differentiation}
%-------------------------------------------------

To differentiate the trajectories with respect to their parameters, the following definitions and conventions will be used.

\begin{definition}
The \emph{right Jacobian} $\Jr$ is defined by
\begin{equation}
    \Jr(\vect{v})\vect{w} = \lim_{\epsilon\to 0}{\frac{\Log(\Exp(\vect{v})^{-1}\Exp(\vect{v}+\epsilon\vect{w}))}{\epsilon}}
\end{equation}
for all vectors $\vect{w}\in\R^\liedim$.
\end{definition}
Intuitively, the right Jacobian measures how the difference of $\Exp(\vect{v})$ and $\Exp(\vect{v}+\vect{w})$, mapped back to $\R^d$, changes with $\vect{w}$.
It has the following properties:
\begin{align}
    \label{eq:jrprop1}
    &\Log(\Exp(\vect{v})\Exp(\vect{\delta})) 
    = \vect{v} 
    + \Jr(\vect{v})^{-1}\vect{\delta} 
    + \mathcal{O}(\vect{\delta}^2)\,, \\
    \label{eq:jrprop2}
    &\Exp(\vect{v}+\vect{\delta}) 
    = \Exp(\vect{v})\Exp(\Jr(\vect{v})\vect{\delta}) + \mathcal{O}(\vect{\delta}^2)\,.
\end{align}
If $\liegroup=\SO(3)$, the right Jacobian and its inverse can be found in~\cite[p.~40]{chirikjian2011stochastic}.

Whenever an expression $f(\mat{X})$ is differentiated w.r.t.\ to a Lie group element $\mat{X}$, the derivative is defined as
\begin{align}
    \frac{\partial f(\mat{X})}{\partial \mat{X}} = \left. \frac{\partial f(\Exp(\vect{\delta})\mat{X})}{\partial \vect{\delta}}\right|_{\vect{\delta}=0}.
\end{align}
Consequently, an update step for the variable $\mat{X}$ during optimization is performed as $\mat{X}\leftarrow\Exp(\vect{\delta})\mat{X}$, where $\vect{\delta}$ is the the increment determined by the optimization algorithm.

%%%%%%%%%%%%%%%%%%%%%%%%%%%%%%%%%%%%%%%%%%%%%%%%%%
\section{B-Spline Foundations}
%%%%%%%%%%%%%%%%%%%%%%%%%%%%%%%%%%%%%%%%%%%%%%%%%%

%-------------------------------------------------
\subsection{Basics}
%-------------------------------------------------

B-splines define a continuous function using a set of control points (knots), see also Fig.~\ref{fig:spline}.
They have a number of desirable properties for continuous trajectory representation, in particular locality and $C^{k-1}$ smoothness for a spline of order $k$ (degree $k-1$).
We will focus on uniform B-splines of order $k$ in this work.
\begin{definition}
A \emph{uniform B-spline} of order $k$ is defined by its control points $\vect{p}_i$ ($0\leq i\leq N$), times $t_i=t_0+i\Delta t$ and a set of spline coefficients $B_{i,k}(t)$:
\begin{equation}
    \label{eq:bspline}
    \vect{p}(t) = \sum_{i=0}^{N}{B_{i,k}(t)\,\vect{p}_i}\,,
\end{equation}
where the coefficients are given by the De~Boor--Cox recurrence relation~\cite{cox1972numerical,de1972calculating}
\begin{align}
B_{i,0}(t) &= %\chi_{[t_i,t_{i+1})}(t) =
        \begin{cases}
        1, & \text{for~~} t_{i}\leq t< t_{i+1}\\
        0, & \text{otherwise}
        \end{cases} \\
B_{i,j}(t) &= \frac{t-t_i}{j\Delta t} B_{i,j-1}(t) + \frac{t_{i+j+1}-t}{j\Delta t} B_{i+1,j-1}(t)\,.
\end{align}
\end{definition}
It is possible to transform \eqref{eq:bspline} into a cumulative representation:
\begin{align}
\label{eq:cumulative_spline}
\vect{p}(t) &= \widetilde B_{0,k}(t)\,\vect{p}_0  + \sum_{i=1}^{N}{\widetilde B_{i,k}(t) (\vect{p}_i - \vect{p}_{i-1})}\,, \\
\widetilde B_{i,k}(t) &= \sum_{s=i}^{N} B_{s,k}(t)\,.
\end{align}

\begin{figure}
    \centering
    \includegraphics[width=7.2cm, height=4.2cm]{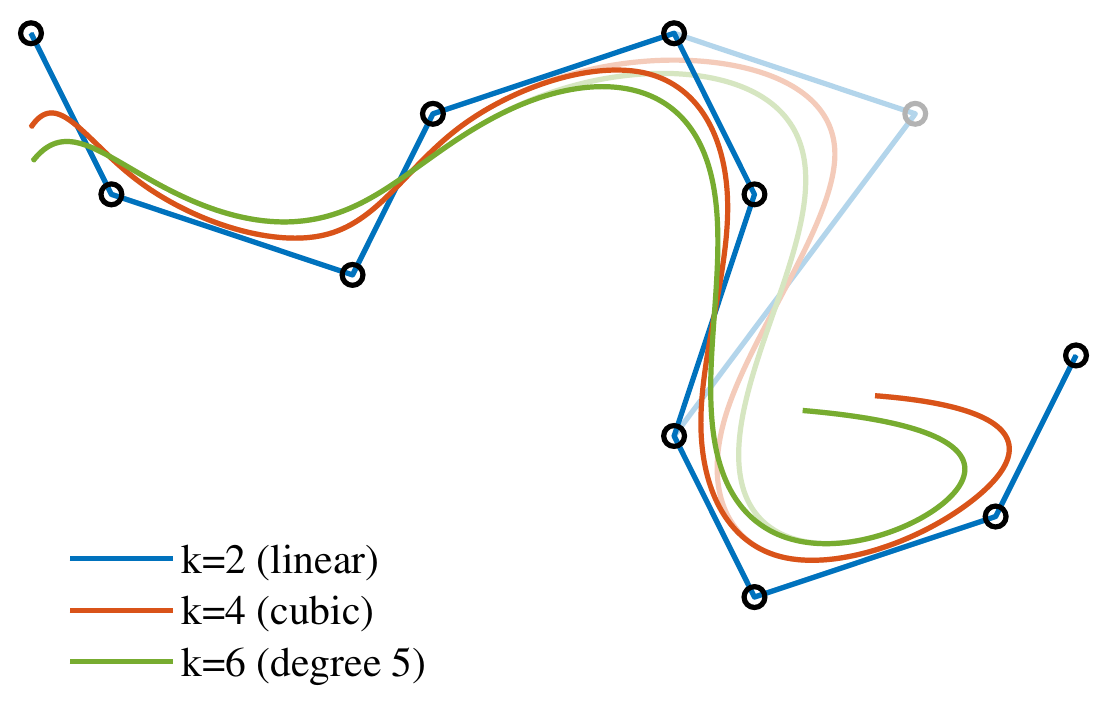}
    \caption{
        A set of control points (black) in $\R^2$, and the resulting B-splines for different orders $k$.
        For the linear spline ($k=2$), the spline curve actually hits the control points, while for higher order splines, this is not true in general.
        The lighter lines show how the splines change if one control point changes:
        the curves only change locally, i.e. in the vicinity of the modified control point.
    }
    \label{fig:spline}
\end{figure}

%-------------------------------------------------
\subsection{Matrix Representation}
%-------------------------------------------------

B-splines have local support, which means that for a spline of order $k$, at a given $t$ only $k$ control points contribute to the value of the spline.
As shown in \cite{qin00}, it is possible to represent the spline coefficients using a matrix representation, which is constant for uniform B-splines.

At time $t \in [t_i, t_{i+1})$ the value of $\vect{p}(t)$ only depends on the control points $\vect{p}_{i}$, $\vect{p}_{i+1}$, ..., $\vect{p}_{i+k-1}$.
To simplify calculations, we transform time to a uniform representation $s(t) := (t - t_0)/\Delta t$, such that the control points transform into $\{0,..,k-1\}$.
We define $u(t) := s(t)-i$ as normalized time elapsed since the start of the segment $[t_i, t_{i+1})$ and from now on use $u$ as temporal variable.
The value of $\vect{p}(u)$ can then be evaluated using a matrix representation as follows~\cite{qin00}:
\begin{align}
    \underbrace{\vect{p}(u)}_{\liedim\times 1} &=
    \underbrace{\begin{pmatrix} \vect{p}_{i}, & \vect{p}_{i+1}, & \cdots, & \vect{p}_{i+k-1}
    \end{pmatrix}}_{\liedim\times k} \underbrace{\mat{M}^{(k)}}_{k\times k}
    \underbrace{\vect{u}}_{k\times 1}\,,
    % \begin{pmatrix} 1 \\ u \\ \vdots \\ u^{k-1}\end{pmatrix}\,,
\end{align}
where $u_n=u^n$, $\mat{M}^{(k)}$ is a blending matrix with entries
\begin{equation}
\begin{aligned}
    m_{s,n}^{(k)} &= \frac{C^{n}_{k-1}}{(k-1)!} \sum^{k-1}_{l=s} (-1)^{l-s} C^{l-s}_{k} (k-1-l)^{k-1-n}\,, \\
    & s,n \in \{0, \dots, k-1\}\,,
\end{aligned}
\end{equation}
and $C_k^s=\frac{k!}{s!\,(k-s)!}$ are binomial coefficients.

It is also possible to use the matrix representation for the cumulative splines:
\begin{equation}
\label{eq:cum_matrix}
    \vect{p}(u) =
\begin{pmatrix} 
\vect{p}_i\,, & \vect{d}_1^i\,, & \cdots\,, & \vect{d}_{k-1}^i
\end{pmatrix}
\mat{\widetilde M}^{(k)}
\vect{u}\,,
\end{equation}   
with cumulative matrix entries $\widetilde m_{j,n}^{(k)} = \sum_{s=j}^{k-1} m_{s,n}^{(k)}$ and difference vectors $\vect{d}_j^i = \vect{p}_{i+j}-\vect{p}_{i+j-1}$.

We show in the Appendix that the first row of the matrix $\mat{\widetilde M}^{(k)}$ is always equal to the unit vector $\vect{e}_0\in\R^k$:
\begin{equation}
    \label{eq:mtilde}
    \widetilde m_{0,n}^{(k)} = \delta_{n,0} \quad \forall\: n=0,\cdots,k-1\,,
\end{equation}
with the Kronecker delta $\delta_{n,0}$.
In particular, if we define % $\vect{\lambda}(u)$ as defined in \eqref{eq:lambda}.
\begin{equation}
    \label{eq:lambda}
    \vect{\lambda}(u) = \mat{\widetilde{M}}^{(k)}
    \vect{u}\,,
    % \begin{pmatrix}1\\u\\ \vdots \\u^{k-1}\end{pmatrix}\,,
\end{equation}
this implies $\lambda_0(u)\equiv 1$.
Inserting $\vect{\lambda}$ with $\lambda_0=1$ into the cumulative matrix representation \eqref{eq:cum_matrix} allows us to write $\vect{p}(u)$ conveniently as follows:

\begin{theorem}
    The B-spline of order $k$ at position $u$ can be written as
\begin{equation}
    \label{eq:local_spline}
    \vect{p}(u) = \vect{p}_i + \sum_{j=1}^{k-1}{\lambda_j(u)\cdot\vect{d}^i_j}\,.
\end{equation}
\end{theorem}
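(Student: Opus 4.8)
The plan is to obtain \eqref{eq:local_spline} directly from the cumulative matrix representation \eqref{eq:cum_matrix} by regrouping the product and exploiting the special structure of the first row of $\mat{\widetilde M}^{(k)}$. First I would recall that \eqref{eq:cum_matrix} writes $\vect{p}(u)$ as the product of the $\liedim\times k$ matrix whose columns are $\vect{p}_i,\vect{d}_1^i,\dots,\vect{d}_{k-1}^i$ with $\mat{\widetilde M}^{(k)}\vect{u}$. Since matrix multiplication is associative, I can evaluate $\mat{\widetilde M}^{(k)}\vect{u}$ first; by the definition \eqref{eq:lambda} this product is precisely the coefficient vector $\vect{\lambda}(u)$.

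After this regrouping we have
\begin{equation}
\vect{p}(u) = \begin{pmatrix} \vect{p}_i & \vect{d}_1^i & \cdots & \vect{d}_{k-1}^i \end{pmatrix} \vect{\lambda}(u),
\end{equation}
and expanding the matrix--vector product column by column yields
\begin{equation}
\vect{p}(u) = \lambda_0(u)\,\vect{p}_i + \sum_{j=1}^{k-1}\lambda_j(u)\,\vect{d}_j^i.
\end{equation}
The final step is to eliminate the leading coefficient: because the first row of $\mat{\widetilde M}^{(k)}$ equals $\vect{e}_0$, i.e.\ $\widetilde m_{0,n}^{(k)}=\delta_{n,0}$ by \eqref{eq:mtilde}, the zeroth entry of $\vect{\lambda}(u)$ is $\lambda_0(u)=\sum_{n}\delta_{n,0}\,u^n = 1$ independently of $u$. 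Substituting $\lambda_0(u)=1$ gives exactly \eqref{eq:local_spline}.

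The argument itself is a one-line associativity rearrangement followed by reading off the top row, so I expect no real difficulty in this part; the genuine work sits in the prerequisite identity \eqref{eq:mtilde}, which the paper defers to the Appendix and which I would simply cite. The one bookkeeping point worth checking is that the indexing lines up---that $\vect{\lambda}(u)$ is indexed from $0$ to $k-1$ in the same order as the columns $\vect{p}_i,\vect{d}_1^i,\dots,\vect{d}_{k-1}^i$---so that $\lambda_0$ multiplies $\vect{p}_i$ while each $\lambda_j$ with $j\geq 1$ multiplies the corresponding difference vector $\vect{d}_j^i$.
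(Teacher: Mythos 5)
Your proposal is correct and is essentially identical to the paper's own argument, which likewise inserts $\vect{\lambda}(u)=\mat{\widetilde{M}}^{(k)}\vect{u}$ into the cumulative matrix representation \eqref{eq:cum_matrix} and uses $\lambda_0(u)\equiv 1$ from \eqref{eq:mtilde}, deferring the proof of that identity to the Appendix. Your explicit check that the indexing of $\vect{\lambda}$ aligns with the columns $\vect{p}_i,\vect{d}_1^i,\dots,\vect{d}_{k-1}^i$ is a sound bookkeeping step that the paper leaves implicit.
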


%%%%%%%%%%%%%%%%%%%%%%%%%%%%%%%%%%%%%%%%%%%%%%%%%%
\section{Cumulative B-splines in Lie groups}
%%%%%%%%%%%%%%%%%%%%%%%%%%%%%%%%%%%%%%%%%%%%%%%%%%

The cumulative B-spline in \eqref{eq:local_spline} can be generalized to Lie groups~\cite{sommer2015continuous}, and in particular to $\SO(3)$ for smooth rotation generation~\cite{kim95}.
First, a simple $\R^\liedim$-addition in \eqref{eq:local_spline} corresponds to the group multiplication in a general Lie group (matrix multiplication in matrix Lie groups).
Second, while we can easily scale a vector $\vect{d}\in\R^\liedim$ by a factor $\lambda\in\R$ using scalar multiplication in $\R^\liedim$, the concept of scaling does not exist for elements $\mat{X}$ of a Lie group.
Thus, we first have to map $\mat{X}$ from $\liegroup$ to the Lie algebra $\liealg$, which is a vector space, then scale the result, and finally map it back to $\liegroup$: $\Exp(\lambda\cdot\Log(\mat{X}))$.
These two observations together lead to the following definition of cumulative B-splines in Lie groups:
\begin{definition}
    The cumulative B-spline of order $k$ in a Lie group $\liegroup$ with control points $\mat{X}_0,\cdots,\mat{X}_N\in\liegroup$ has the form
    \begin{equation}
        \label{eq:bspline_lie}
        \mat{X}(u) = \mat{X}_i \cdot \prod_{j=1}^{k-1}{\Exp\left(\lambda_j(u)\cdot\vect{d}^i_j\right)}\,,
    \end{equation}
    with the generalized difference vector $\vect{d}_j^i$
    \begin{equation}
        \vect{d}^i_j = \Log\left(\mat{X}_{i+j-1}^{-1}\mat{X}_{i+j}\right) \in \R^d\,.
    \end{equation}
\end{definition}
We should mention that as opposed to a B-spline in $\R^\liedim$, the order of multiplication (addition in \eqref{eq:local_spline}) does matter here, and different generalizations to Lie groups are possible in principle.
In practice, we use the convention that is most commonly used in related work~\cite{lovegrove2013spline, kerl15, mueggler18, ovren2018spline, ovren2019trajectory}.
We omit the $i$ to simplify notation, and define
\begin{align}
    \mat{A}_j(u) &= \Exp\left(\lambda_j(u)\cdot\vect{d}_j\right)
\end{align}
to obtain the more concise expression
\begin{equation}
    \label{eq:conciseR}
    \mat{X}(u) = \mat{X}_i \cdot \prod_{j=1}^{k-1}{\mat{A}_j(u)}\,.
\end{equation}
Note that this can be re-formulated as a recurrence relation:
\begin{align}
    \mat{X}(u) &= \mat{X}^{(k)}(u)\,, \\
    \label{eq:recursive_R}
    \mat{X}^{(j)}(u) &= \mat{X}^{(j-1)}(u)\mat{A}_{j-1}(u) \,, \\
    \mat{X}^{(1)}(u) & = \mat{X}_i\,.
\end{align}

%-------------------------------------------------
\subsection{Time derivatives}
%-------------------------------------------------

The main contribution of this paper is a simplified representation of the temporal derivatives, which needs less operations compared to related work.
We first review the derivatives according to the current standard, and then introduce ours.
We denote differentiation w.r.t.\ $u$ by a dot and apply the product rule to \eqref{eq:conciseR} to get
\begin{align}
    \dot{\mat{X}}(u) = \mat{X}_i\cdot\sum_{j=1}^{k-1}{\left(\prod_{l=1}^{j-1}{\mat{A}_l(u)}\right)\dot{\mat{A}}_j(u)\left(\prod_{l=j+1}^{k-1}{\mat{A}_l(u)}\right)}
\end{align}
with
\begin{equation}
    \label{eq:A_dot}
    \dot{\mat{A}}_j(u) = \dot\lambda_j(u)\mat{A}_j(u)\mat{D}_j = \dot\lambda_j(u)\mat{D}_j\mat{A}_j(u)\,,
\end{equation}
and $\mat{D}_j=(\vect{d}_j)_\wedge$. Note that $\mat{A}_j(u)$ and $\mat{D}_j$ commute by definition.
For the case of cubic splines ($k=4$), this reduces to
\begin{equation}
    \dot{\mat{X}} = \mat{X}_i\left(\dot{\mat{A}}_1 \mat{A}_2 \mat{A}_3 + \mat{A}_1 \dot{\mat{A}}_2 \mat{A}_3 + \mat{A}_1 \mat{A}_2 \dot{\mat{A}}_3\right),
\end{equation}
which is the well known formula from e.g.~\cite{lovegrove2013spline, mueggler18, ovren2018spline, ovren2019trajectory}.
An implementation following this formula needs to perform $(k-1)^2+1$ matrix-matrix multiplications and is thus quadratic in the spline degree.
We propose to define the time derivatives recursively instead, which needs less operations:

\begin{theorem}\label{th:td}
    The time derivative $\dot{\mat{X}}$ is given by the following recurrence relation:
    \begin{align}
        \label{eq:Rdot}
        \dot{\mat{X}} &= \mat{X} \vect{\omega}^{(k)}_\wedge\,, \\
        \label{eq:recursion_omega}
        \vect{\omega}^{(j)} &= \Adj{\mat{A}_{j-1}^{-1}} \vect{\omega}^{(j-1)} + \dot \lambda_{j-1}\vect{d}_{j-1} \in \R^d\,, \\
        \vect{\omega}^{(1)} &= \vect{0} \in \R^\liedim\,.
    \end{align}
    $\vect{\omega}^{(k)}$ is commonly referred to as \emph{velocity}.
    For $\liegroup=\SO(n)$, we also call it \emph{angular velocity}.
\end{theorem}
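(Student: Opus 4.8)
The plan is to prove a strengthened, parametrized statement by induction on the recursion index $j$: namely that the partial product $\mat{X}^{(j)}(u)$ from \eqref{eq:recursive_R} satisfies $\dot{\mat{X}}^{(j)} = \mat{X}^{(j)}\vect{\omega}^{(j)}_\wedge$ for every $j=1,\dots,k$. The theorem is then the special case $j=k$, since $\mat{X}=\mat{X}^{(k)}$. This choice of hypothesis is natural because it mirrors exactly the recursive structure already used to build $\mat{X}(u)$, so that the induction step will line up term-by-term with the recurrence \eqref{eq:recursion_omega}.

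For the base case $j=1$, the factor $\mat{X}^{(1)}=\mat{X}_i$ is independent of $u$, hence $\dot{\mat{X}}^{(1)}=0$; this matches $\mat{X}^{(1)}\vect{\omega}^{(1)}_\wedge$ precisely because $\vect{\omega}^{(1)}=\vect{0}$. For the inductive step I would differentiate $\mat{X}^{(j)}=\mat{X}^{(j-1)}\mat{A}_{j-1}$ by the product rule to obtain $\dot{\mat{X}}^{(j)} = \dot{\mat{X}}^{(j-1)}\mat{A}_{j-1} + \mat{X}^{(j-1)}\dot{\mat{A}}_{j-1}$, then substitute the induction hypothesis $\dot{\mat{X}}^{(j-1)} = \mat{X}^{(j-1)}\vect{\omega}^{(j-1)}_\wedge$ into the first term and $\dot{\mat{A}}_{j-1} = \dot\lambda_{j-1}\mat{A}_{j-1}\mat{D}_{j-1}$ from \eqref{eq:A_dot} into the second. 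The objective is to factor $\mat{X}^{(j)}=\mat{X}^{(j-1)}\mat{A}_{j-1}$ out on the left of both terms.

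The key manipulation is in the first term, where $\vect{\omega}^{(j-1)}_\wedge$ sits to the left of $\mat{A}_{j-1}$ and must be transported to the right. I would insert $\mat{A}_{j-1}\mat{A}_{j-1}^{-1}$ and recognize the conjugation $\mat{A}_{j-1}^{-1}\vect{\omega}^{(j-1)}_\wedge\mat{A}_{j-1} = (\Adj{\mat{A}_{j-1}^{-1}}\vect{\omega}^{(j-1)})_\wedge$ directly from the adjoint definition \eqref{eq:defadj} (with $\mat{X}=\mat{A}_{j-1}^{-1}$, whose inverse is $\mat{A}_{j-1}$). The second term is immediate, since $\mat{A}_{j-1}$ and $\mat{D}_{j-1}$ commute, $\dot\lambda_{j-1}$ is a scalar, and $\mat{D}_{j-1}=(\vect{d}_{j-1})_\wedge$, so it equals $\mat{X}^{(j)}(\dot\lambda_{j-1}\vect{d}_{j-1})_\wedge$. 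Using linearity of the hat map to merge $(\Adj{\mat{A}_{j-1}^{-1}}\vect{\omega}^{(j-1)})_\wedge + (\dot\lambda_{j-1}\vect{d}_{j-1})_\wedge$ into one hatted vector then reproduces exactly $\vect{\omega}^{(j)}_\wedge$ as defined in \eqref{eq:recursion_omega}, closing the induction.

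I expect the only genuine obstacle to be bookkeeping in the conjugation, specifically ensuring that the adjoint of $\mat{A}_{j-1}^{-1}$ appears rather than that of $\mat{A}_{j-1}$; this is forced by the side on which $\mat{A}_{j-1}$ multiplies and is the one place an index slip would be easy to make. Everything else is routine once the strengthened hypothesis is in place, and no additional properties beyond the adjoint identity \eqref{eq:defadj}, the commutativity of $\mat{A}_j$ with $\mat{D}_j$, and the linearity of the hat map are required.
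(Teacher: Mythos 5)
Your proposal is correct and follows essentially the same argument as the paper's own proof: induction on the partial products $\mat{X}^{(j)}$ of \eqref{eq:recursive_R}, differentiating via the product rule, substituting \eqref{eq:A_dot}, and transporting $\vect{\omega}^{(j-1)}_\wedge$ past $\mat{A}_{j-1}$ by inserting $\mat{A}_{j-1}\mat{A}_{j-1}^{-1}$ and invoking the adjoint definition \eqref{eq:defadj}. The only difference is a cosmetic index shift (your step goes from $j-1$ to $j$ rather than $j$ to $j+1$), so there is nothing substantive to add.
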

%In Fig.~\ref{fig:rdot}, we provide an intuition why we call $\vect{\omega}=(\mat{X}^{-1}\dot{\mat{X}})_\vee$ velocity.
\begin{proof}
    We use the recursive definition of $\mat{X}(u)$ in \eqref{eq:recursive_R} and prove by induction over $j$ that
    \begin{equation}
        \label{eq:Rdot_induction}
        \dot{\mat{X}}^{(j)} = \mat{X}^{(j)}\vect{\omega}^{(j)}_\wedge\,,
    \end{equation}
    which is equivalent to the claim for $j=k$.
    First, we note that for $j=1$, $\mat{X}^{(j)}(u) = \mat{X}_i$ is constant w.r.t.\ $u$, and thus $\dot{\mat{X}}^{(1)} = 0 = \mat{X}^{(1)}\hat{\vect{\omega}}^{(1)}_\wedge$.
    Now, let %\eqref{eq:recursion_omega} and
    \eqref{eq:Rdot_induction} be true for some $j\in\{1, \dots, k-1\}$, then we have
    \begin{equation}
    \begin{aligned}
        \dot{\mat{X}}^{(j+1)} &= \partial_u\left(\mat{X}^{(j)}\mat{A}_j\right) = \dot{\mat{X}}^{(j)}\mat{A}_j + \mat{X}^{(j)}\dot{\mat{A}}_j \\
            &= \mat{X}^{(j)}\left(\vect{\omega}^{(j)}_\wedge \mat{A}_j + \dot\lambda_k\mat{A}_j\mat{D}_j \right) \\
            &= \mat{X}^{(j)}\mat{A}_j \left(\mat{A}_j^{-1}\vect{\omega}^{(j)}_\wedge \mat{A}_j + \dot\lambda_j (\vect{d}_j)_\wedge \right) \\
            &= \mat{X}^{(j+1)} \underbrace{\left(\left( \Adj{\mat{A}_j^{-1}}\vect{\omega}^{(j)}\right)_\wedge + \dot\lambda_j(\vect{d}_j)_\wedge\right)}_{=\vect{\omega}^{(j+1)}_\wedge}. \\
    \end{aligned}
    \end{equation}
\end{proof}

Note that our recursive definition of $\dot{\mat{X}}$ makes it very easy to see that $\mat{X}^{-1}\dot{\mat{X}}=\vect{\omega}^{(k)}_\wedge\in\liealg$ for any Lie group, a property which is implicitly used in many works~\cite{lovegrove2013spline, ovren2019trajectory}, but never shown explicitly for arbitrary $\liegroup$.

The scheme presented in Theorem \ref{th:td} computes time derivatives with only $k-1$ matrix-vector multiplications and vector additions, together with one single matrix-matrix multiplication.

Since rotations in $\SO(3)$ are a common and important use case of B-splines in Lie groups, we explicitly state the angular velocity recursion \eqref{eq:recursion_omega} for $\liegroup=\SO(3)$:
\begin{equation}
    \label{eq:recursion_omega_so3}
    \vect{\omega}^{(j)} = \mat{A}_{j-1}^\top \vect{\omega}^{(j-1)} + \dot{\lambda}_{j-1}\vect{d}_{j-1}\,.
\end{equation}

For second order time derivatives, it is easy to see that the calculations proposed in related works~\cite{lovegrove2013spline, mueggler18} need
\begin{equation}
    k(k-1) + k~C_{k-1}^2 = \frac{1}{2}k^2(k-1)
\end{equation}
matrix-matrix multiplications and are thus cubic in the spline order.
We propose a different way to compute $\ddot{\mat{X}}$:

\begin{theorem}
    \label{th:ttd}
    The second derivative of $\mat{X}$ w.r.t.\ $u$ can be computed by the following recurrence relation:
    \begin{equation}
        \label{eq:Rddot}
        \ddot{\mat{X}} = \mat{X}\left((\vect{\omega}^{(k)})_\wedge^2 + \dot{\vect{\omega}}^{(k)}_\wedge\right),
    \end{equation}
    where the \emph{(angular) acceleration} $\dot{\vect{\omega}}^{(k)}$ is recursively defined by
    \begin{align}
    \label{eq:second_t_recursion}
    &\begin{aligned}
        \dot{\vect{\omega}}^{(j)} &= \dot\lambda_{j-1}\left[\vect{\omega}^{(j)}_\wedge, \mat{D}_{j-1}\right]_\vee \\
        &\quad +\Adj{\mat{A}_{j-1}^{-1}}\dot{\vect{\omega}}^{(j-1)} + \ddot{\lambda}_{j-1}\vect{d}_{j-1}\,,
    \end{aligned} \\
    &\dot{\vect{\omega}}^{(1)} = \vect{0}\in\R^\liedim\,.
    \end{align}
\end{theorem}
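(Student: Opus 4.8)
The plan is to reduce everything to Theorem~\ref{th:td}. Since that result already gives $\dot{\mat{X}} = \mat{X}\,\vect{\omega}^{(k)}_\wedge$, I would simply differentiate this identity once more with respect to $u$. The product rule yields $\ddot{\mat{X}} = \dot{\mat{X}}\,\vect{\omega}^{(k)}_\wedge + \mat{X}\,\partial_u(\vect{\omega}^{(k)}_\wedge)$, and substituting $\dot{\mat{X}} = \mat{X}\,\vect{\omega}^{(k)}_\wedge$ together with the linearity of the hat map (so that $\partial_u(\vect{\omega}^{(k)}_\wedge) = (\partial_u\vect{\omega}^{(k)})_\wedge$) gives exactly \eqref{eq:Rddot}, provided I set $\dot{\vect{\omega}}^{(k)} := \partial_u\vect{\omega}^{(k)}$. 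Thus the only real content is to show that this time derivative obeys the recurrence \eqref{eq:second_t_recursion}.

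To establish the recurrence I would differentiate \eqref{eq:recursion_omega} and argue by induction on $j$. The base case is immediate: $\vect{\omega}^{(1)}\equiv\vect{0}$ is constant, so $\dot{\vect{\omega}}^{(1)}=\vect{0}$. For the step, applying $\partial_u$ to $\vect{\omega}^{(j)} = \Adj{\mat{A}_{j-1}^{-1}}\vect{\omega}^{(j-1)} + \dot\lambda_{j-1}\vect{d}_{j-1}$ produces three pieces via the product rule. The term $\dot\lambda_{j-1}\vect{d}_{j-1}$ differentiates to $\ddot\lambda_{j-1}\vect{d}_{j-1}$ because the control points, and hence $\vect{d}_{j-1}$, are constant in $u$; this is the last summand of \eqref{eq:second_t_recursion}. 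Differentiating the adjoint term gives $\bigl(\partial_u\Adj{\mat{A}_{j-1}^{-1}}\bigr)\vect{\omega}^{(j-1)} + \Adj{\mat{A}_{j-1}^{-1}}\dot{\vect{\omega}}^{(j-1)}$, and the second of these is already the middle summand of \eqref{eq:second_t_recursion}.

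The main obstacle is the first of these pieces, $\bigl(\partial_u\Adj{\mat{A}_{j-1}^{-1}}\bigr)\vect{\omega}^{(j-1)}$, where the adjoint is differentiated with $\vect{\omega}^{(j-1)}$ held fixed. Here I would write $\Adj{\mat{A}_{j-1}^{-1}}\vect{v} = (\mat{A}_{j-1}^{-1}\vect{v}_\wedge\mat{A}_{j-1})_\vee$, use $\dot{\mat{A}}_{j-1} = \dot\lambda_{j-1}\mat{A}_{j-1}\mat{D}_{j-1} = \dot\lambda_{j-1}\mat{D}_{j-1}\mat{A}_{j-1}$ from \eqref{eq:A_dot} together with $\partial_u(\mat{A}_{j-1}^{-1}) = -\mat{A}_{j-1}^{-1}\dot{\mat{A}}_{j-1}\mat{A}_{j-1}^{-1}$, and collect terms. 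The conjugation structure makes the derivative telescope into a single commutator: writing $\mat{W} := (\Adj{\mat{A}_{j-1}^{-1}}\vect{\omega}^{(j-1)})_\wedge$, I expect to obtain $\dot\lambda_{j-1}[\mat{W},\mat{D}_{j-1}]$. The final trick is to rewrite $\mat{W}$ using the first-order recursion \eqref{eq:recursion_omega} itself: since $\Adj{\mat{A}_{j-1}^{-1}}\vect{\omega}^{(j-1)} = \vect{\omega}^{(j)} - \dot\lambda_{j-1}\vect{d}_{j-1}$, we have $\mat{W} = \vect{\omega}^{(j)}_\wedge - \dot\lambda_{j-1}\mat{D}_{j-1}$, and the self-commutator $[\mat{D}_{j-1},\mat{D}_{j-1}]$ vanishes, leaving precisely $\dot\lambda_{j-1}[\vect{\omega}^{(j)}_\wedge,\mat{D}_{j-1}]_\vee$. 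Summing the three pieces reproduces \eqref{eq:second_t_recursion} and closes the induction.
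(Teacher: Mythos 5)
Your proposal is correct and follows essentially the same route as the paper's proof: both obtain \eqref{eq:Rddot} by applying the product rule to $\dot{\mat{X}} = \mat{X}\vect{\omega}^{(k)}_\wedge$, and both derive \eqref{eq:second_t_recursion} by differentiating the velocity recursion \eqref{eq:recursion_omega}, using \eqref{eq:A_dot} to turn the derivative of the adjoint term into the commutator $\dot\lambda_{j-1}[\bar{\vect{\omega}}_\wedge,\mat{D}_{j-1}]_\vee$, and then substituting $\bar{\vect{\omega}} = \vect{\omega}^{(j)} - \dot\lambda_{j-1}\vect{d}_{j-1}$ so that the self-commutator drops out. Your explicit induction framing merely makes precise what the paper leaves implicit, namely that the recursively defined $\dot{\vect{\omega}}^{(j)}$ coincides with $\partial_u\vect{\omega}^{(j)}$.
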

\begin{proof}
\eqref{eq:Rddot} follows from our expression for $\dot{\mat{X}}$ \eqref{eq:Rdot} and the product rule.
For \eqref{eq:second_t_recursion}, the last summand is trivial, so we focus on the derivative of the first term in the velocity recursion \eqref{eq:recursion_omega}:
first, we note that
\begin{equation}
    \Adj{\mat{A}_{j-1}^{-1}} \vect{\omega}^{(j-1)} = \left(\mat{A}_{j-1}^{-1}\vect{\omega}^{(j-1)}_\wedge \mat{A}_{j-1}\right)_\vee =: \bar{\vect{\omega}}\,,
\end{equation}
so the time derivative of that term is a sum of three terms following the product rule for differentiation.
The middle term is
\begin{equation}
    \left(\mat{A}_{j-1}^{-1}\dot{\vect{\omega}}^{(j-1)}_\wedge \mat{A}_{j-1}\right)_\vee = \Adj{\mat{A}_{j-1}^{-1}} \dot{\vect{\omega}}^{(j-1)}\,,
\end{equation}
which is exactly the second summand in \eqref{eq:second_t_recursion}.
The remaining two terms are 
\begin{equation}
\begin{aligned}
    &\left(\dot{\mat{A}}_{j-1}^{-1}\vect{\omega}^{(j-1)}_\wedge \mat{A}_{j-1} + \mat{A}_{j-1}^{-1}\vect{\omega}^{(j-1)}_\wedge \dot{\mat{A}}_{j-1}\right)_\vee \\
    &\quad \labeleq{(\ref{eq:A_dot})} \left(-\dot{\lambda}_{j-1}\mat{D}_{j-1}\bar{\vect{\omega}}_\wedge + \bar{\vect{\omega}}_\wedge\dot{\lambda}_{j-1}\mat{D}_{j-1}\right)_\vee \\
    &\quad \labeleq{(\ref{eq:comutator})} \dot{\lambda}_{j-1}\left[\bar{\vect{\omega}}_\wedge, \mat{D}_{j-1} \right]_\vee \\
    &\quad \labeleq{(\ref{eq:recursion_omega})} \dot{\lambda}_{j-1}\left[\vect{\omega}^{(j)}_\wedge - \dot{\lambda}_{j-1}\mat{D}_{j-1}, \mat{D}_{j-1} \right]_\vee \\
    &\quad = \dot{\lambda}_{j-1}\left[\vect{\omega}^{(j)}_\wedge, \mat{D}_{j-1} \right]_\vee\,.
\end{aligned}
\end{equation}
\end{proof}

This proposed scheme computes second time derivatives with only $2k$ matrix-matrix multiplications, $k-1$ matrix-vector multiplications, $3(k-1)$ vector additions and one matrix addition in any $\liegroup$.
For $\liegroup=\SO(3)$, the acceleration $\dot{\vect{\omega}}^{(j)}$ in \eqref{eq:second_t_recursion} simplifies to
\begin{equation}
\label{eq:second_t_recursion_so3}
  \dot{\vect{\omega}}^{(j)} = \dot{\lambda}_{j-1}\vect{\omega}^{(j)}\times\vect{d}_{j-1} + \mat{A}_{j-1}^\top \dot{\vect{\omega}}^{(j-1)} + \ddot{\lambda}_{j-1}\vect{d}_{j-1}\,.
\end{equation}
This implies that for $\SO(3)$, second order time derivatives only need $3(k-1)$ matrix-vector multiplications and vector additions plus two matrix-matrix multiplications, reducing computation time even further.

The iterative scheme for the computation of time derivatives can be extended to higher order derivatives.
As an example, we provide third order time derivatives of $\mat{X}$ together with the jerk $\ddot{\vect{\omega}}^{(k)}$ in the Appendix.
The number of matrix operations needed to compute this is still linear in the order of the spline.
We also provide a comprehensive overview of the matrix operations needed for the different approaches in the Appendix.

%-------------------------------------------------
\subsection{Jacobians w.r.t.\ control points in $\SO(3)$}
\label{sec:jacobians}
%-------------------------------------------------

To emphasize that this paragraph focuses on $\SO(3)$, we denote elements by $\mat{R}$ instead of $\mat{X}$.
The values of both the spline itself and its velocity and acceleration depend on the choice of control points. % (see also Fig.~\ref{fig:control_points}).
For the derivatives w.r.t.\ the control points of the B-spline, we first note that a control point $\mat{R}_{i+j}$ appears implicitly in $\vect{d}_j$ and $\vect{d}_{j+1}$, and for $j=0$, we also have the explicit dependence of $\mat{R}(u)$ on $\mat{R}_i$.
Thus, we compute derivatives w.r.t.\ the $\vect{d}_j$ and then apply the chain rule.
We focus on $\liegroup=\SO(3)$ as it is the most relevant group for computer vision applications.

In order to apply the chain rule, we need the derivatives of the $\vect{d}_j$ w.r.t.\ the $\mat{R}_{i+j}$, which follow trivially from the definition of the right Jacobian and the adjoint of $SO(3)$:
\begin{equation}
    \frac{\partial{\vect{d}_j}}{\partial \mat{R}_{i+j}} = \Jr^{-1}(\vect{d}_j)\mat{R}_{i+j}^{\top}\,, \quad
    \frac{\partial{\vect{d}_{j+1}}}{\partial \mat{R}_{i+j}} = -\frac{\partial{\vect{d}_{j+1}}}{\partial \mat{R}_{i+j+1}}\,.
    %\frac{\partial{\vect{d}_{j+1}}}{\partial \mat{R}_{i+j}} = -\Jr^{-1}(\vect{d}_{j+1})\mat{R}_{i+j+1}^{\top}\,.
\end{equation}
Now consider a curve $\vect{f}$ that maps to $\R^\liedim$, for example the spline value, velocity or acceleration. $\vect{f}$ depends on the set of control points $\mat{R}_{i+j}$ and has derivatives
\begin{equation}
    \frac{\dif \vect{f}}{\dif \mat{R}_{i+j}} = \ddj{\vect{f}} \cdot \frac{\partial{\vect{d}_j}}{\partial \mat{R}_{i+j}} + \frac{\partial \vect{f}}{\partial\vect{d}_{j+1}} \cdot \frac{\partial{\vect{d}_{j+1}}}{\partial \mat{R}_{i+j}}\,.
\end{equation}
for $j>0$.
For $j=0$ we obtain
\begin{equation}
    \frac{\dif \vect{f}}{\dif \mat{R}_{i}} = \frac{\partial \vect{f}}{\partial\mat{R}_i} + \frac{\partial \vect{f}}{\partial\vect{d}_{1}} \cdot \frac{\partial{\vect{d}_1}}{\partial \mat{R}_{i}}\,.
\end{equation}
Thus, to compute Jacobians w.r.t.\ control points, we need the partial derivatives w.r.t.\ the $\vect{d}_j$ as well as $R_i$.

In the following, we will first derive some useful properties, and then present a recursive scheme to compute Jacobians of $\vect{\rho}$, $\vect{\omega}=\vect{\omega}^{(k)}$ and $\dot{\vect{\omega}}=\dot{\vect{\omega}}^{(k)}$, where we define the vector $\vect{\rho}\in\R^d$ as the mapping of $R$ to $\R^d$ by the $\Log$ map:
\begin{equation}
    \vect{\rho}(u) = \Log \mat{R}(u)\,.
\end{equation}
The Jacobians of $\vect{\omega}^{(j)}$ and $\dot{\vect{\omega}}^{(j)}$ w.r.t.\ $\vect{d}_j$ are zero:
from the recursion schemes of $\vect{\omega}$ and $\dot{\vect{\omega}}$  in Theorems \ref{th:td} and \ref{th:ttd}, we find that the first index for which $\vect{d}_j$ appears explicitly or implicitly (in the form of $A_j$) is $j+1$.

Furthermore, we use the following important relation in our derivations, which is proven in the Appendix:
\begin{equation}
\label{eq:jac_A_omega}
    \frac{\partial}{\partial\vect{d}}\Exp(-\lambda\vect{d})\vect{\omega} = \lambda\Exp(-\lambda\vect{d})\vect{\omega}_\wedge\Jr(-\lambda\vect{d})
\end{equation}
for $\lambda\in\R$ and $\vect{d},\vect{\omega}\in\R^3$.
Together, these findings have two important implications:

\begin{theorem}
The Jacobian of $\vect{\omega}^{(j+1)}$ w.r.t.\ $\vect{d}_j$ is
\begin{equation}
    \ddj{\vect{\omega}^{(j+1)}} = \lambda_j\mat{A}_j^\top\vect{\omega}^{(j)}_\wedge \Jr(-\lambda_j\vect{d}_j) + \dot{\lambda}_j\id\,.
\end{equation}
\end{theorem}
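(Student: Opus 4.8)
The plan is to differentiate the $\SO(3)$ velocity recurrence directly and invoke the key relation \eqref{eq:jac_A_omega}. Starting from the $\SO(3)$ form of the recursion \eqref{eq:recursion_omega_so3}, shifted by one index, I write
\begin{equation}
    \vect{\omega}^{(j+1)} = \mat{A}_j^\top\vect{\omega}^{(j)} + \dot\lambda_j\vect{d}_j\,.
\end{equation}
The first fact I would exploit is that, as already observed in the excerpt, $\vect{\omega}^{(j)}$ carries no dependence on $\vect{d}_j$ (the earliest index at which $\vect{d}_j$ enters, explicitly or through $\mat{A}_j$, is $j+1$), so under $\partial/\partial\vect{d}_j$ the vector $\vect{\omega}^{(j)}$ is treated as constant. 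The only two $\vect{d}_j$-dependent pieces are then the factor $\mat{A}_j^\top$ in the first summand and the explicit $\vect{d}_j$ in the second.

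Next I would handle the two summands separately. The second term is immediate: $\partial(\dot\lambda_j\vect{d}_j)/\partial\vect{d}_j = \dot\lambda_j\id$, since $\dot\lambda_j$ is a scalar independent of $\vect{d}_j$. For the first term, the key is to recognize that in $\SO(3)$ we have $\mat{A}_j^\top = \mat{A}_j^{-1} = \Exp(\lambda_j\vect{d}_j)^{-1} = \Exp(-\lambda_j\vect{d}_j)$, so that
\begin{equation}
    \mat{A}_j^\top\vect{\omega}^{(j)} = \Exp(-\lambda_j\vect{d}_j)\,\vect{\omega}^{(j)}\,.
\end{equation}
This puts the term into exactly the shape covered by \eqref{eq:jac_A_omega} with the substitution $\lambda = \lambda_j$, $\vect{d} = \vect{d}_j$ and $\vect{\omega} = \vect{\omega}^{(j)}$.

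Applying \eqref{eq:jac_A_omega} and substituting back $\Exp(-\lambda_j\vect{d}_j) = \mat{A}_j^\top$ yields $\lambda_j\mat{A}_j^\top\vect{\omega}^{(j)}_\wedge\Jr(-\lambda_j\vect{d}_j)$ for the first summand, and adding the $\dot\lambda_j\id$ from the second gives the claimed expression. There is essentially no hard step here: once the two enabling facts are in place — the independence of $\vect{\omega}^{(j)}$ from $\vect{d}_j$ and the identity $\mat{A}_j^\top = \Exp(-\lambda_j\vect{d}_j)$ — the argument collapses to a one-line product-rule computation. The only point that needs a little care is ensuring that the lemma \eqref{eq:jac_A_omega} is applied with the correct sign inside the exponential (the transpose producing $-\lambda_j\vect{d}_j$ rather than $+\lambda_j\vect{d}_j$), since this sign propagates into the argument of the right Jacobian and must appear as $\Jr(-\lambda_j\vect{d}_j)$ in the final result.
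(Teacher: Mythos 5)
Your proof is correct and follows essentially the same route as the paper, which likewise proves the theorem by applying \eqref{eq:jac_A_omega} to the $\SO(3)$ velocity recursion \eqref{eq:recursion_omega_so3}; you have merely made explicit the steps the paper leaves implicit (the independence of $\vect{\omega}^{(j)}$ from $\vect{d}_j$, the identity $\mat{A}_j^\top = \Exp(-\lambda_j\vect{d}_j)$, and the product rule).
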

\begin{proof}
We apply \eqref{eq:jac_A_omega} to the angular velocity $\vect{\omega}^{(j+1)}$ as derived for $\liegroup=\SO(3)$ in \eqref{eq:recursion_omega_so3}.
\end{proof}

\begin{theorem}
The Jacobian of $\dot{\vect{\omega}}^{(j+1)}$ w.r.t.\ $\vect{d}_j$ is
\begin{equation}
    \begin{aligned}
    \ddj{\dot{\vect{\omega}}^{(j+1)}} &= \dot{\lambda}_j\left(\vect{\omega}^{(j+1)}_\wedge-\mat{D}_j\ddj{\vect{\omega}^{(j+1)}} \right) \\
    &\qquad+\lambda_j\mat{A}_j^\top\dot{\vect{\omega}}^{(j)}_\wedge\Jr(-\lambda_j\vect{d}_j) + \ddot{\lambda}_j\id\,.
    \end{aligned} 
\end{equation}
\end{theorem}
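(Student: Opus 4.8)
The plan is to differentiate the $\SO(3)$ acceleration recursion \eqref{eq:second_t_recursion_so3} term by term, after shifting its index from $j$ to $j+1$:
\[
\dot{\vect{\omega}}^{(j+1)} = \dot{\lambda}_{j}\,\vect{\omega}^{(j+1)}\times\vect{d}_{j} + \mat{A}_{j}^\top \dot{\vect{\omega}}^{(j)} + \ddot{\lambda}_{j}\vect{d}_{j}\,.
\]
The three summands on the right correspond precisely to the three contributions in the claimed formula, so I would treat each in turn and sum the results at the end.

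The last summand $\ddot{\lambda}_j\vect{d}_j$ is linear in $\vect{d}_j$ and contributes $\ddot{\lambda}_j\id$ immediately. For the middle summand I would first invoke the observation made just before the theorem, namely that $\dot{\vect{\omega}}^{(j)}$ carries no dependence on $\vect{d}_j$ (the index $j$ first enters $\vect{\omega}$ and $\dot{\vect{\omega}}$ at level $j+1$), so that the only $\vect{d}_j$-dependence of $\mat{A}_j^\top\dot{\vect{\omega}}^{(j)}$ lives in $\mat{A}_j^\top=\Exp(-\lambda_j\vect{d}_j)$. I would then apply the key relation \eqref{eq:jac_A_omega} with $\lambda=\lambda_j$, $\vect{d}=\vect{d}_j$ and $\vect{\omega}=\dot{\vect{\omega}}^{(j)}$, which yields exactly $\lambda_j\mat{A}_j^\top\dot{\vect{\omega}}^{(j)}_\wedge\Jr(-\lambda_j\vect{d}_j)$, the second term of the claim.

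The remaining and most delicate summand is $\dot{\lambda}_j\,\vect{\omega}^{(j+1)}\times\vect{d}_j$, in which both factors depend on $\vect{d}_j$: explicitly through the second argument of the cross product, and implicitly through $\vect{\omega}^{(j+1)}$, whose Jacobian $\ddj{\vect{\omega}^{(j+1)}}$ is supplied by the preceding theorem. The plan is to separate the two dependencies using the product rule for the cross product, exploiting the antisymmetry $\vect{\omega}\times\vect{d}=\vect{\omega}_\wedge\vect{d}=-\vect{d}_\wedge\vect{\omega}$. Holding $\vect{\omega}^{(j+1)}$ fixed and writing the product as $\vect{\omega}^{(j+1)}_\wedge\vect{d}_j$ isolates the explicit $\vect{d}_j$ and contributes $\vect{\omega}^{(j+1)}_\wedge$; holding $\vect{d}_j$ fixed in the skew form and writing the product as $-\mat{D}_j\vect{\omega}^{(j+1)}$ isolates the implicit dependence and contributes $-\mat{D}_j\,\ddj{\vect{\omega}^{(j+1)}}$. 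Multiplying by $\dot{\lambda}_j$ and adding the two pieces gives $\dot{\lambda}_j\big(\vect{\omega}^{(j+1)}_\wedge-\mat{D}_j\,\ddj{\vect{\omega}^{(j+1)}}\big)$, the first term. Summing all three contributions yields the stated Jacobian.

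I expect the main obstacle to be the bookkeeping in this last step: one must route the explicit and the implicit $\vect{d}_j$-dependence of the cross product into two different but equivalent representations before differentiating, since naively differentiating the single expression $\vect{\omega}^{(j+1)}\times\vect{d}_j$ risks either double counting or dropping one of the two contributions. Once the product rule is set up correctly, everything reduces to a direct application of \eqref{eq:jac_A_omega} and linearity.
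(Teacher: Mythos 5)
Your proposal is correct and takes essentially the same route as the paper's own (much terser) proof: differentiate the index-shifted recursion \eqref{eq:second_t_recursion_so3} term by term, apply \eqref{eq:jac_A_omega} to the $\mat{A}_j^\top\dot{\vect{\omega}}^{(j)}$ summand, and use $\vect{\omega}\times\vect{d}=\vect{\omega}_\wedge\vect{d}=-\mat{D}\vect{\omega}$ to split the cross-product term into its explicit and implicit $\vect{d}_j$-dependencies. You merely spell out the bookkeeping (including the prerequisite fact that $\dot{\vect{\omega}}^{(j)}$ is independent of $\vect{d}_j$, and that $\mat{A}_j^\top=\Exp(-\lambda_j\vect{d}_j)$ in $\SO(3)$) that the paper leaves implicit.
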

\begin{proof}
We apply \eqref{eq:jac_A_omega} to $\dot{\vect{\omega}}^{(j+1)}$ as defined in \eqref{eq:second_t_recursion_so3} and use
\begin{equation}
    \vect{\omega}\times\vect{d} = \vect{\omega}_\wedge\vect{d} = -\mat{D}\vect{\omega}
\end{equation}
for $\vect{d},\vect{\omega}\in\R^3$ and $\mat{D}=\vect{d}_\wedge$.
\end{proof}

These results for the Jacobians of $\vect{\omega}^{(j+1)}$ and $\dot{\vect{\omega}}^{(j+1)}$ w.r.t.\ $\vect{d}_j$ can be used to derive Jacobians of $\vect{\omega}$ and $\dot{\vect{\omega}}$ by recursion:

\begin{theorem}
The following recurrence relation (from $j=k-1$ to $j=1$) allows for Jacobian computation of $\vect{\rho}$, $\vect{\omega}$ and $\dot{\vect{\omega}}$ in a linear (w.r.t.\ $k$) number of multiplications and additions:
\begin{align}
    \mat{P}_{k-1} &= \id\,, \\
    \vect{s}_{k-1} &= \vect{0}\,, \\
    \label{eq:jac_rho_d}
    \ddj{\vect{\rho}} &= \lambda_j\Jr^{-1}(\vect{\rho})P_j\Jr(\lambda_j\vect{d}_j)\,, \\
    \label{eq:jac_omega_d}
    \ddj{\vect{\omega}} &= P_j\ddj{\vect{\omega}^{(j+1)}}\,, \\
    \label{eq:jac_omegadot_d}
    \ddj{\dot{\vect{\omega}}} &= P_j\ddj{\dot{\vect{\omega}}^{(j+1)}} - (\vect{s}_j)_\wedge\ddj{\vect{\omega}}\,, \\
    \mat{P}_{j-1} &= \mat{P}_j\mat{A}_j^\top\,, \\
    \vect{s}_{j-1} &= \vect{s}_j + \dot\lambda_j\mat{P}_j\vect{d}_j\,.
\end{align}
$P_j$ and $\vect{s}_j$ are accumulator products and sums, respectively.
\end{theorem}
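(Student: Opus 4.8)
The plan is to treat the three Jacobians separately, and in each case to reduce the full derivative $\mathrm{d}/\mathrm{d}\vect{d}_j$ to a single application of the per-level Jacobians of the two preceding theorems, pre-multiplied by an accumulator. First I would identify the accumulators in closed form by unrolling the stated recursions: $\mat{P}_j=\mat{A}_{k-1}^\top\mat{A}_{k-2}^\top\cdots\mat{A}_{j+1}^\top$ is exactly the matrix propagating a perturbation of $\vect{\omega}^{(j+1)}$ up to $\vect{\omega}^{(k)}=\vect{\omega}$, and $\vect{s}_j=\sum_{l=j+1}^{k-1}\dot\lambda_l\,\mat{P}_l\,\vect{d}_l$. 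Since each $\mat{A}_l\in\SO(3)$, every partial product $\mat{P}_l$ is again a rotation, a fact I will use repeatedly. Two index identities will do most of the work: with $\mat{G}_l:=\mat{A}_l^\top\cdots\mat{A}_{j+1}^\top$ one has $\mat{P}_l\,\mat{G}_l=\mat{P}_j$, and the $\SO(3)$ conjugation rule gives $\mat{P}_l\,\mat{D}_l\,\mat{P}_l^\top=(\mat{P}_l\vect{d}_l)_\wedge$.

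For $\vect{\rho}=\Log\mat{R}$ I would write $\mat{R}=\mat{L}\,\mat{A}_j\,\mat{N}$ with $\mat{L}=\mat{R}_i\mat{A}_1\cdots\mat{A}_{j-1}$ and $\mat{N}=\mat{A}_{j+1}\cdots\mat{A}_{k-1}$, both independent of $\vect{d}_j$. Perturbing $\vect{d}_j$ inside $\mat{A}_j=\Exp(\lambda_j\vect{d}_j)$ and applying \eqref{eq:jrprop2} produces a right-multiplicative increment $\Exp(\lambda_j\Jr(\lambda_j\vect{d}_j)\,\epsilon\vect{w})$; commuting it past $\mat{N}$ with the adjoint relations \eqref{eq:adj} turns it into a right increment on $\mat{R}$ with factor $\Adj{\mat{N}^{-1}}$, after which \eqref{eq:jrprop1} contributes the leading factor $\Jr^{-1}(\vect{\rho})$. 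Using $\Adj{\mat{N}^{-1}}=\mat{N}^\top=\mat{P}_j$ in $\SO(3)$ yields \eqref{eq:jac_rho_d}. For $\vect{\omega}$ I would unroll the velocity recursion \eqref{eq:recursion_omega_so3} from $j+1$ to $k$: since $\vect{d}_j$ enters only through $\mat{A}_j$ and the explicit term at level $j+1$, all of its influence is funneled through $\vect{\omega}^{(j+1)}$, the surviving prefactor is precisely $\mat{P}_j$, and \eqref{eq:jac_omega_d} follows by combining this with the already-derived $\ddj{\vect{\omega}^{(j+1)}}$.

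The hard part is $\dot{\vect{\omega}}$, because the acceleration recursion \eqref{eq:second_t_recursion_so3} contains the cross-product terms $\dot\lambda_l\,\vect{\omega}^{(l+1)}\times\vect{d}_l$ whose factor $\vect{\omega}^{(l+1)}$ itself depends on $\vect{d}_j$ for every $l>j$. Unrolling therefore produces two families of contributions: the direct one, $\mat{P}_j\ddj{\dot{\vect{\omega}}^{(j+1)}}$ (reproducing the first and third summands of \eqref{eq:jac_omegadot_d} through the per-level Jacobian of $\dot{\vect{\omega}}^{(j+1)}$), and a sum over $l=j+1,\dots,k-1$ of terms $\mat{P}_l\,\dot\lambda_l\,(\vect{\omega}^{(l+1)}\times\vect{d}_l)$ in which only $\vect{\omega}^{(l+1)}$ carries $\vect{d}_j$-dependence. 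I would rewrite $\vect{\omega}^{(l+1)}\times\vect{d}_l=-\mat{D}_l\vect{\omega}^{(l+1)}$, so that differentiating the sum gives $-\sum_l\dot\lambda_l\,\mat{P}_l\,\mat{D}_l\,\ddj{\vect{\omega}^{(l+1)}}$. Substituting $\ddj{\vect{\omega}^{(l+1)}}=\mat{G}_l\,\ddj{\vect{\omega}^{(j+1)}}$ and then inserting $\mat{I}=\mat{P}_l^\top\mat{P}_l$ lets me invoke the two index identities above to collapse each summand to $-(\dot\lambda_l\,\mat{P}_l\vect{d}_l)_\wedge\,\mat{P}_j\,\ddj{\vect{\omega}^{(j+1)}}$.

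Summing over $l$ and recognizing $\sum_l\dot\lambda_l\,\mat{P}_l\vect{d}_l=\vect{s}_j$ and $\mat{P}_j\,\ddj{\vect{\omega}^{(j+1)}}=\ddj{\vect{\omega}}$ then reproduces exactly the correction $-(\vect{s}_j)_\wedge\,\ddj{\vect{\omega}}$ of \eqref{eq:jac_omegadot_d}, completing the $\dot{\vect{\omega}}$ case. I expect the collapse of this cross-product sum to be the only delicate point; everything else is a careful but routine bookkeeping of which factors depend on $\vect{d}_j$. Finally, the complexity claim is immediate from the scheme itself: the downward sweep updates $\mat{P}_j$ and $\vect{s}_j$ with one matrix product and one scaled vector addition per step, and each Jacobian is then one further matrix product, so the total cost is linear in $k$.
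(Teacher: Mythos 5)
Your proposal is correct. For \eqref{eq:jac_rho_d} and \eqref{eq:jac_omega_d} it coincides with the paper's own proofs: the same sandwich decomposition of $\mat{R}$ with $\mat{A}_\mathrm{post}=\mat{A}_{j+1}\cdots\mat{A}_{k-1}$, the same use of \eqref{eq:jrprop2}, \eqref{eq:adj} and \eqref{eq:jrprop1} together with $\Adj{\mat{A}_\mathrm{post}^{-1}}=\mat{A}_\mathrm{post}^\top=\mat{P}_j$, and the same observation that $\mat{A}_{k-1}$ is constant in $\vect{d}_j$, iterated down to level $j+1$. The genuine difference is in \eqref{eq:jac_omegadot_d}: the paper argues in two stages, first recording the one-step relation \eqref{eq:jac_omegadot_d_norec}, $\ddj{\dot{\vect{\omega}}}=-\dot\lambda_{k-1}\mat{D}_{k-1}\ddj{\vect{\omega}}+\mat{A}_{k-1}^\top\ddj{\dot{\vect{\omega}}^{(k-1)}}$, and then proving in the appendix, by induction over the level $l$ with partial accumulators $\mat{P}_j^{(l)},\vect{s}_j^{(l)}$, that this recursion is equivalent to the stated closed form. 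You instead unroll \eqref{eq:second_t_recursion_so3} into the explicit sum $\dot{\vect{\omega}}=\mat{P}_j\dot{\vect{\omega}}^{(j+1)}+\sum_{l=j+1}^{k-1}\mat{P}_l\bigl(\dot\lambda_l\,\vect{\omega}^{(l+1)}\times\vect{d}_l+\ddot\lambda_l\vect{d}_l\bigr)$, differentiate term by term (only $\vect{\omega}^{(l+1)}$ and $\dot{\vect{\omega}}^{(j+1)}$ carry $\vect{d}_j$-dependence), and collapse the cross-product sum via $\mat{P}_l\mat{D}_l\mat{P}_l^\top=(\mat{P}_l\vect{d}_l)_\wedge$ and $\mat{P}_l\mat{G}_l=\mat{P}_j$ — the same $\SO(3)$ conjugation fact that powers the paper's induction step, where it appears as $\mat{A}_l^\top(\vect{s}_j^{(l)})_\wedge=\bigl((\vect{s}_j^{(l+1)})_\wedge-\dot\lambda_l\mat{D}_l\bigr)\mat{A}_l^\top$. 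Both routes are sound and rest on the same algebra; yours is constructive, in that it derives the correction term $-(\vect{s}_j)_\wedge\ddj{\vect{\omega}}$ and thereby explains where the accumulator $\vect{s}_j$ comes from, whereas the paper verifies a stated formula by induction but, in exchange, explicitly validates the one-step recursion \eqref{eq:jac_omegadot_d_norec} that an implementation would actually execute. One cosmetic slip: your parenthetical claim that the direct term reproduces ``the first and third summands'' of \eqref{eq:jac_omegadot_d} is garbled — it is simply the first of its two summands — but this has no bearing on the validity of the argument.
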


\begin{proof}[Proof of \eqref{eq:jac_rho_d}]
We write $\mat{R}$ as
\begin{equation}
    \mat{R} = \mat{R}_i\mat{A}_\mathrm{pre}\Exp(\lambda_j\vect{d}_j)\mat{A}_\mathrm{post}\,,
\end{equation}
where $\mat{A}_\mathrm{pre}$ and $\mat{A}_\mathrm{post}$ are implicitly defined by comparison with the generic form of a cumulative Lie group B-spline \eqref{eq:bspline_lie}.
They do not depend on $\vect{d}_j$.
The right Jacobian property \eqref{eq:jrprop2}, combined with the adjoint property, yields
\begin{equation}
\begin{aligned}
    &\mat{R}_i\mat{A}_\mathrm{pre}\Exp(\lambda_j(\vect{d}_j+\vect{\delta}))\mat{A}_\mathrm{post} \\
    &= \mat{R}\Exp(\lambda_j\mat{A}_\mathrm{post}^\top\Jr(\lambda_j\vect{d}_j)\vect{\delta}) + \mathcal{O}(\vect{\delta}^2)
\end{aligned}
\end{equation}
Now, we apply the right Jacobian property \eqref{eq:jrprop1} to obtain
\begin{equation}
\begin{aligned}
    &\Log\left(\mat{R}_i\mat{A}_\mathrm{pre}\Exp(\lambda_j(\vect{d}_j+\vect{\delta}))\mat{A}_\mathrm{post}\right) \\
    & = \vect{\rho}  + \lambda_j\Jr^{-1}(\vect{\rho})\mat{A}_\mathrm{post}^\top\Jr(\lambda_j\vect{d}_j)\vect{\delta} + \mathcal{O}(\vect{\delta}^2)\,.
\end{aligned}
\end{equation}
Differentiation at $\vect{\delta}=0$ and inserting $\mat{A}_\mathrm{post}^\top=P_j$ yields \eqref{eq:jac_rho_d}.
\end{proof}

\begin{proof}[Proof of \eqref{eq:jac_omega_d}]
Since $j\leq k-2$, $A_{k-1}$ does not depend on $\vect{d}_j$, thus
\begin{equation}
\begin{aligned}
    \frac{\partial\vect{\omega}}{\partial\vect{d}_j}
        &= \frac{\partial}{\partial\vect{d}_j}\left(\mat{A}_{k-1}^\top\vect{\omega}^{(k-1)} + \dot\lambda_{k-1}\vect{d}_{k-1}\right) \\
        &= \mat{A}_{k-1}^\top\frac{\partial\vect{\omega}^{(k-1)}}{\partial\vect{d}_j}\,.
\end{aligned}
\end{equation}
Iterative application of this equation leads to the claim.
\end{proof}

\begin{proof}[Proof of \eqref{eq:jac_omegadot_d}]
First, since the case $j=k-1$ is trivial, we can focus on $j\leq k-2$:
for these cases, we find (by insertion into \eqref{eq:second_t_recursion_so3} that
\begin{equation}
\label{eq:jac_omegadot_d_norec}
\begin{aligned}
    &\ddj{\dot{\vect{\omega}}}  \\
    &= \ddj{}\left(-\dot\lambda_{k-1}\mat{D}_{k-1}\vect{\omega} + \mat{A}_{k-1}^\top\dot{\vect{\omega}}^{(k-1)} + \ddot{\lambda}_{k-1}\vect{d}_{k-1}\right) \\
    &= -\dot\lambda_{k-1}\mat{D}_{k-1}\ddj{\vect{\omega}} + \mat{A}_{k-1}^\top\ddj{\dot{\vect{\omega}}^{(k-1)}}\,.
\end{aligned}
\end{equation}
We prove the equivalence of this and \eqref{eq:jac_omegadot_d} by induction in the Appendix.
\end{proof}

%%%%%%%%%%%%%%%%%%%%%%%%%%%%%%%%%%%%%%%%%%%%%%%%%%
\section{Experiments}
%%%%%%%%%%%%%%%%%%%%%%%%%%%%%%%%%%%%%%%%%%%%%%%%%%

To evaluate the proposed formulation for the B-spline time derivatives and our $\SO(3)$ Jacobians, we conduct two experiments. In the first one, simulated velocity and acceleration measurements are used to estimate the trajectory represented by the spline. This allows us to highlight the computational advantages of the proposed formulation. In the second experiment, we demonstrate an example of a real-world application, in particular a multiple camera and IMU calibration. In this case we estimate the continuous trajectory of the IMU, transformations from the cameras to the IMU, accelerometer and gyroscope biases and gravity in the world frame.

% In both cases, the baseline method for comparison computes time derivatives as used in prior work~\cite{lovegrove2013spline, mueggler18}.
In both cases, we implemented as baseline method the time derivative computation from prior work~\cite{lovegrove2013spline, mueggler18}.
Unless stated otherwise, optimizations are done using Ceres \cite{ceres-solver} with the automatic differentiation option.
This option uses dual numbers for computing Jacobians.
In all cases, we use the Levenberg-Marquardt algorithm for optimization with sparse Cholesky decomposition for solving linear systems.
The experiments were conducted on Ubuntu 18.04 with Intel Xeon E5-1620 CPU. We used clang-9 as a compiler with \texttt{-O3 -march=native -DNDEBUG} flags. Even though residual and Jacobian computations are easily parallelizable, in this paper we concentrate on differences between formulations and run all experiments in single-thread configuration.

\ifcvprfinal
We have made the experiments available open-source at:
\begin{scriptsize}
\texttt{https://gitlab.com/tum-vision/lie-spline-experiments}
\end{scriptsize}
\fi

%-------------------------------------------------
\subsection{Simulated Sequence}
%-------------------------------------------------

\begin{table}
\centering
\begin{tabular}{ c c c  c  c  c}
\toprule
$\liegroup$ & $k$ & Config. & Ours & Baseline & Speedup \\
\midrule
$\SO(3)$ & 4 & acc. & \bf{0.057} & 0.147 & 2.57 \\
$\SO(3)$ & 4 & vel. & \bf{0.058} & 0.088 & 1.52 \\
$\SO(3)$ & 5 & acc. & \bf{0.081} & 0.280 & 3.45 \\
$\SO(3)$ & 5 & vel. & \bf{0.082} & 0.141 & 1.73 \\
$\SO(3)$ & 6 & acc. & \bf{0.117} & 0.520 & 4.43 \\
$\SO(3)$ & 6 & vel. & \bf{0.111} & 0.217 & 1.95 \\
$\SE(3)$ & 4 & acc. & \bf{0.277} & 0.587 & 2.12 \\
$\SE(3)$ & 4 & vel. & \bf{0.253} & 0.334 & 1.32 \\
$\SE(3)$ & 5 & acc. & \bf{0.445} & 1.196 & 2.69 \\
$\SE(3)$ & 5 & vel. & \bf{0.405} & 0.581 & 1.43 \\
$\SE(3)$ & 6 & acc. & \bf{0.644} & 2.332 & 3.62 \\
$\SE(3)$ & 6 & vel. & \bf{0.590} & 0.936 & 1.59 \\
\bottomrule
\end{tabular}
\vspace{2mm}
\caption{Optimization time in seconds for the proposed and baseline formulations with velocity (\emph{vel.}) and acceleration (\emph{acc.}) measurements, and the speedup achieved by our formulation. In all the experiments both formulations converged to the same result with the same number of iterations.}
\label{tab:synthetic_results}
\end{table}

\begin{table*}
\centering
\begin{tabular}{r | c c c c c c c}
\toprule
Estimated variable & $\vect{g}$ [\si{\meter\per\second\squared}] & $\vect{b}_a$ [\si{\meter\per\second\squared}] & $\vect{b}_g$ [\si{\radian\per\second}] & $\vect{t}_{ic0}$ [\si{\meter}]& $\vect{t}_{ic1}$ [\si{\meter}] & $\mat{R}_{ic0}$ [\si{\radian}] & $\mat{R}_{ic1}$ [\si{\radian}] \\
\midrule
Max deviation &
$6.07 \cdot 10^{-5}$ &
$6.32 \cdot 10^{-5}$ &
$2.14 \cdot 10^{-9}$ &
$6.34 \cdot 10^{-6}$ &
$6.33 \cdot 10^{-6}$ &
$3.51 \cdot 10^{-8}$ &
$3.34 \cdot 10^{-8}$ \\
\bottomrule
\end{tabular}
\vspace{1mm}
\caption{Maximum difference between the mean estimate and the estimates from all calibration methods. For vectors ($\vect{g}$, $\vect{b}_a$, $\vect{b}_g$, $\vect{t}_{ic0}$, $\vect{t}_{ic1}$), the $L_2$ norm is used. For rotational values ($\mat{R}_{ic0}$, $\mat{R}_{ic1}$) the angle norm in radians is used.
% The mean reprojection error after convergence is 0.89 pixels.
The results show that independent of the underlying spline representation ($\SO(3)\times\R^3$ or $\SE(3)$) the calibration converges to the same result.}
\label{tab:calibration_accuracy}
\vspace{-3mm}
\end{table*}

One typical application of B-splines on Lie groups is trajectory estimation from a set of sensor measurements. In our first experiment we assume that we have pose, velocity and acceleration measurements for either $\SO(3)$ or $\SE(3)$ and formulate an optimization problem that is supposed to estimate the values of the spline knots representing the true trajectory. In this case we minimize the sum of squared residuals, where a residual is the difference between the measured and the computed value. 

We use a spline with $100 + k$ knots with 2 second spacing, 25 value measurements and 2020 velocity or acceleration measurements that are uniformly distributed across the spline. The measurements are generated from the ground-truth spline. We initialize the knot values of the splines that will be optimized to perturbed ground truth values, which results in 5 optimization iterations until convergence. Table \ref{tab:synthetic_results} summarizes the results. As expected, the proposed formulation outperforms the baseline formulation in all cases. The time difference is higher for the acceleration measurements, since there the baseline formulation is cubic in the order of spline.

%-------------------------------------------------
\subsection{Camera-IMU calibration}
%-------------------------------------------------

In the second experiment we aim to show the applicability of our approach for real-world applications with camera-IMU calibration as an example. We use two types of splines of order 5 and knot spacing of \SI{10}{\milli\second} to represent the continuous motion of the IMU coordinate frame: $\SO(3)\times\R^3$ (split representation) and $\SE(3)$. For both cases we implemented the proposed and the baseline method to compute time derivatives.

We use the \emph{calib-cam1} sequence of \cite{schubert18dataset} in this experiment. It contains 51.8 seconds of data and consists of 10336 accelerometer and the same number of gyroscope measurements, 1036 images for two cameras which observe 291324 corners of the calibration pattern. We assume that the camera intrinsic parameters are pre-calibrated and there is a good initial guess between camera and IMU rotations computed from angular velocities. All optimizations in our experiment have the same initial conditions and noise settings. A segment of the sequence trajectory after optimization is visualized in Figure \ref{fig:traj_pattern}.

The projection residuals are defined as:
\begin{align}
    \vectwo{r}_{p}(u) &= \pi( \mat{T}_{ic}^{-1} \mat{T}_{wi}(u)^{-1} \vect{x}) - \hat{\vectwo{p}}, \\
    \mat{T}_{wi} &= \begin{pmatrix}
    \mat{R}_{wi} & \vect{t}_{wi}\\
    0 & 1
    \end{pmatrix} \in \SE(3),
\end{align}
where $\mat{T}_{wi}(u)$ is the pose of the IMU in the coordinate frame computed from the spline either as a pose directly ($\SE(3)$), or as two separate values for rotation and translation ($\SO(3)\times\R^3$). $\mat{T}_{ic}$ is the transformation from the camera where the corner was observed to the IMU, $\pi$ is a fixed projection function, $\vect{x}$ is the 3D coordinate of the calibration corner and $\hat{\vectwo{p}}$ denotes the 2D coordinates of the corner detected in the image.

The gyroscope and accelerometer residuals are defined as:
\begin{align}
    \vect{r}_{\omega}(u) &= \vect{\omega}(u) - \Tilde{\vect{\omega}} + \vect{b}_g, \\
    \vect{r}_{a}(u) &= \mat{R}_{wi}(u)^{-1}(\ddot{\vect{t}}_{wi}(u) + \vect{g}) - \Tilde{\vect{a}} + \vect{b}_a,
\end{align}
where $\Tilde{\vect{\omega}}$ and $\Tilde{\vect{a}}$ are the measurements, $\vect{b}_g$ and $\vect{b}_a$ are static biases and $\vect{g}$ is the gravity vector in world coordinates. $\mat{R}_{wi}(u)$ is the rotation from IMU to world frame. The definition of  $\vect{\omega}(u)$ and $\ddot{\vect{t}}_{wi}(u)$ depends on the spline representation that we use. For the $\SO(3)\times\R^3$ representation, $\vect{\omega}(u)$ is the angular velocity in the body frame computed as in (\ref{eq:recursion_omega_so3}) and $\ddot{\vect{t}}_{wi}(u)$ is the second derivative of the $\R^3$ spline representing the translation of the IMU in the world frame. For $\SE(3)$, $\vect{\omega}(u)$ is the rotational component of the velocity computed in (\ref{eq:recursion_omega}) and $\ddot{\vect{t}}_{wi}(u)$ is the translation part of the second time derivative of the pose computed in (\ref{eq:Rddot}).
The $\SE(3)$ formulation of these residuals is identical to the one used in \cite{lovegrove2013spline}.

The calibration is done by minimizing a function that combines the residuals for all measurements:
\begin{align} 
    E &= \sum \vect{r}^{\top}_{\omega} W_{\omega} \vect{r}_{\omega} + \sum \vect{r}^{\top}_{a} W_{a} \vect{r}_{a} + \sum \vect{r}^{\top}_{p} W_{p} \vect{r}_{p},
\end{align}
where $W_{\omega}$, $W_{a}$, $W_{p}$ are the weight matrices computed using the sensor noise characteristics.

In all conducted experiments the calibration converged to the same values (see Table \ref{tab:calibration_accuracy}) after 12 iterations, so switching to our formulation does not affect accuracy of the solution. Our results confirm previous reports \cite{haarbach2018survey, ovren2019trajectory} that the $\SE(3)$ spline representation does not introduce any advantages compared to the split representation, but requires more computations.

The timing results are presented in Table \ref{tab:calibration_results}. In all cases we can see the advantage of the proposed formulation for time derivatives. In the case of split representation only the gyroscope residuals are affected, so the difference is relatively small, if Ceres Jacobians are used (6\% less time). For the $\SE(3)$ representation, both gyroscope and accelerometer residuals are affected, since we need to compute the second time derivative for linear acceleration. In this case our formulation results in 36.7\% less computation time. We also present the results with our custom solver that uses split representation and the analytic Jacobians for $\SO(3)$ that we introduced in Section~\ref{sec:jacobians}. It results in a further decrease in the computation time and is able to perform the calibration in less than 6 seconds (2.6 times faster than the baseline approach with split representation).

The results indicate that our formulation of the time derivatives requires less computations, especially if second time derivatives need to be computed. This can have an even larger effect for the calibration of multiple IMUs \cite{rehder16}, where even for the split formulation, evaluation of the rotational acceleration is required.

\begin{table}
\centering
\begin{tabular}{c c c c c}
\toprule
\multicolumn{3}{c}{$\SO(3)\times\R^3$} & \multicolumn{2}{c}{$\SE(3)$} \\
\midrule
%Custom & \multicolumn{4}{c}{Ceres} \\
%\midrule
Ours & Ours & Baseline & Ours & Baseline \\
Analytic & Ceres & Ceres & Ceres & Ceres \\
\midrule
5.82 & 14.18 & 15.09 & 23.56 & 37.14 \\
\bottomrule
\end{tabular}
\vspace{1mm}
\caption{Time in seconds to perform the camera-IMU calibration. Analytic uses a custom solver with the analytic Jacobians for all residuals. All other methods use Ceres solver with dual numbers for Jacobian computations.}
\label{tab:calibration_results}
\vspace{-2mm}
\end{table}

%%%%%%%%%%%%%%%%%%%%%%%%%%%%%%%%%%%%%%%%%%%%%%%%%%
\section{Conclusions}
%%%%%%%%%%%%%%%%%%%%%%%%%%%%%%%%%%%%%%%%%%%%%%%%%%

In this work, we showed how commonly used B-splines on Lie groups can be differentiated (w.r.t.\ time and control points) in a very efficient way.
Both our temporal derivatives and our Jacobians can be computed in $\mathcal{O}(k)$ matrix operations, while traditional computation schemes are up to cubic in $k$.
We mathematically prove the correctness of our statements.
While our contribution has a clear focus on theory, we also demonstrate how choosing a good representation together with our derivative computation lead to speedups of up to 4.4x in practical computer vision applications.
This makes our proposed method highly relevant for real-time applications of continuous-time trajectory representations.

% \ifcvprfinal
% \let\thefootnote\relax\footnote{This work was supported by the ERC Consolidator Grant ``3D Reloaded''.}
% \fi

\clearpage
%%%%%%%%%%%%%%%%%%%%%%%%%%%%%%%%%%%%%%%%%%%%%%%%%%
% Bibliography
%%%%%%%%%%%%%%%%%%%%%%%%%%%%%%%%%%%%%%%%%%%%%%%%%%

{\small
\bibliographystyle{ieee_fullname}
\bibliography{egbib}
}

\clearpage

%%%%%%%%%%%%%%%%%%%%%%%%%%%%%%%%%%%%%%%%%%%%%%%%%%
% Supplemental Title
%%%%%%%%%%%%%%%%%%%%%%%%%%%%%%%%%%%%%%%%%%%%%%%%%%

\begin{center}
  {\Large \bf Supplemental Material: \\
Efficient Derivative Computation for Cumulative B-Splines on Lie Groups \par}
  % additional two empty lines at the end of the title
  \ifcvprfinal \else
  \vspace*{24pt}
  {
  \large
  \lineskip .5em
  \begin{tabular}[t]{c}
     Anonymous CVPR submission\\
     \vspace*{1pt}\\%This space will need to be here in the final copy, so don't squeeze it out for the review copy.
Paper ID \cvprPaperID 
  \end{tabular}
  \par
  }
  % additional small space at the end of the author name
  \vskip .5em
  \fi
  
  % additional empty line at the end of the title block
  \vspace*{12pt}

\end{center}

%%%%%%%%%%%%%%%%%%%%%%%%%%%%%%%%%%%%%%%%%%%%%%%%%%
\section{Appendix}
%%%%%%%%%%%%%%%%%%%%%%%%%%%%%%%%%%%%%%%%%%%%%%%%%%

%-------------------------------------------------
\subsection{Used symbols}
%-------------------------------------------------

In general, we use lowercase bold symbols for vectors in $\R^\liedim$, uppercase regular symbols for elements in the Lie group $\liegroup$ and the Lie algebra $\liealg$, and lowercase regular symbols for scalars, subscripts and superscripts.

\subsubsection{Symbols with specific meaning}

Some symbols appear repeatedly throughout the paper and have a dedicated meaning, most important:

\begingroup
\renewcommand{\arraystretch}{1.5}
\begin{tabular}{r l}
    $\vect{p}(u), \vect{p}_i$ & spline value and control points in $\R^\liedim$ \\
    $\mat{X}(u), \mat{X}_i$ & spline value and control points in $\liegroup$ \\ 
    $\mat{R}(u), \mat{R}_i$ & spline value and control points in $\SO(3)$ \\
    $\vect{\omega}$ & velocity \\
    $\vect{\delta}$ & small increment \\
\end{tabular}
\endgroup

\subsubsection{Indices}

While some of the used subscripts and superscripts are just dummy indices, others have a specific meaning that does not change throughout the paper:

\begingroup
\renewcommand{\arraystretch}{1.5}
\begin{tabular}{r p{5.5cm}}
    $k$ & order of the spline \\
    $i$ & index of control points/time intervals \\
    $j$ & ranges from $0$ or $1$ to $k-1$, recursion index \\
    $l, m, n, s$ & dummy indices without particular meaning, used for definitions and proofs \\
\end{tabular}
\endgroup

%-------------------------------------------------
\subsection{Right Jacobian for $\SO(3)$}
%-------------------------------------------------

If $\liegroup=\SO(3)$, the right Jacobian and its inverse can be found in~\cite[p.~40]{chirikjian2011stochastic}:
\begin{align}
    \Jr(\vect{x}) = \id - \frac{1-\cos(\norm{\vect{x}})}{\norm{\vect{x}}^2}\vect{x}_\wedge + \frac{\norm{\vect{x}}-\sin(\norm{\vect{x}})}{\norm{\vect{x}}^3}\vect{x}_\wedge^2\,,\\
    \Jr(\vect{x})^{-1} = \id + \frac{1}{2}\vect{x}_\wedge + \left(\frac{1}{\norm{\vect{x}}^2} - \frac{1+\cos(\norm{\vect{x}})}{2\norm{\vect{x}}\sin(\norm{\vect{x}})}\right)\vect{x}_\wedge^2\,.
\end{align}

%-------------------------------------------------
\subsection{Third order time derivatives}
%-------------------------------------------------

For completeness, we state the third order time derivatives for a general Lie group $\liegroup$ here.
The proofs are in analogy to those of the first and second order time derivatives, thus we do not repeat them here.
\begin{align}
    &\dddot{\mat{X}} = \mat{X}\left((\vect{\omega}^{(k)}_\wedge)^3 + 2\vect{\omega}^{(k)}_\wedge\dot{\vect{\omega}}^{(k)}_\wedge+\dot{\vect{\omega}}^{(k)}_\wedge\vect{\omega}^{(k)}_\wedge+\ddot{\vect{\omega}}^{(k)}_\wedge\right)\,, \\
    \label{eq:jerk}
    &\begin{aligned}
        &\ddot{\vect{\omega}}^{(j)} = \Adj{\mat{A}_{j-1}^{-1}}\ddot{\vect{\omega}}^{(j-1)} + \dddot{\lambda}_{j-1}\vect{d}_{j-1} \\
        &+ \left[\ddot{\lambda}_{j-1}\vect{\omega}^{(j)}_\wedge + 2\dot{\lambda}_{j-1}\dot{\vect{\omega}}^{(j)}_\wedge - \dot{\lambda}_{j-1}^2[\vect{\omega}^{(j)}_\wedge, \mat{D}_{j-1}] , \mat{D}_{j-1}\right]_\vee\,,
    \end{aligned} \\
    &\ddot{\vect{\omega}}^{(1)} = \vect{0}\in\R^\liedim\,.
\end{align}
$\ddot{\vect{\omega}}$ is called \emph{jerk}.
For $\liegroup=\SO(3)$, the expression \eqref{eq:jerk} becomes slightly simpler:
\begin{equation}
    \begin{aligned}
        &\ddot{\vect{\omega}}^{(j)} = \mat{A}_{j-1}^\top\ddot{\vect{\omega}}^{(j-1)} + \dddot{\lambda}_{j-1}\vect{d}_{j-1} \\
        &+ \left(\ddot{\lambda}_{j-1}\vect{\omega}^{(j)} + 2\dot{\lambda}_{j-1}\dot{\vect{\omega}}^{(j)} - \dot{\lambda}_{j-1}^2\vect{\omega}^{(j)}\times \vect{d}_{j-1}\right)\times\vect{d}_{j-1}\,.
    \end{aligned}
\end{equation}

% %-------------------------------------------------
\subsection{Complexity analysis}
% %-------------------------------------------------

While we are not the first to write down temporal derivatives of Lie group splines, we actually are the first to compute these in only $\mathcal{O}(k)$ matrix operations (multiplications and additions).
Additionally, to the best of our knowledge, we are the first to explicitly propose a scheme for Jacobian computation in $\SO(3)$, which also does not need more than $\mathcal{O}(k)$ matrix operations.
In Table~\ref{tab:complexities_du}, we provide an overview of the needed number of multiplications and additions for the temporal derivatives (both in related work and according to the proposed formulation).

\begin{table*}
\centering
\begin{tabular}{c c c c c c c c c c c}
\toprule
& $\dot{\mat{X}}$ Baseline && $\dot{\mat{X}}$ Ours && $\ddot{\mat{X}}$ Baseline && $\ddot{\mat{X}}$ Ours, any $\liegroup$ && $\ddot{\mat{X}}$ Ours, $\SO(3)$ & \\
\midrule
m-m mult. & $(k-1)^2+1$ & \blue{10} & 1 & \blue{1} & $\frac{1}{2}k^2(k-1)$ & \blue{24} & $2k$ & \blue{8} & $2$ & \blue{2} \\
m-v mult. & $0$ & \blue{0} & $k-1$ & \blue{3} & $0$ & \blue{0} & $k-1$ & \blue{3} & $2(k-1)$ & \blue{6} \\
add. & $k-2$ & \blue{2} & $k-1$ & \blue{3} & $\frac{1}{2}k^2(k-1)$ & \blue{24} & $3k-2$ & \blue{10} & $2k-1$ & \blue{7} \\
\bottomrule
\end{tabular}
\vspace{2mm}
\caption{Number of matrix operations needed to compute temporal derivatives of the $\liegroup$-valued splines: \emph{m-m/m-v mult.} denote matrix-matrix and matrix-vector multiplications, respectively. \emph{add.} denotes additions of matrices or vectors. Our formulation needs consistently less operations than the baseline approach.
The \blue{blue numbers} give the number of operations for a cubic spline ($k=4$).}
\label{tab:complexities_du}
\end{table*}

%-------------------------------------------------
\subsection{Proofs}
%-------------------------------------------------

%-------------------------------------------------
\subsubsection{Proof of \eqref{eq:mtilde} (cumulative blending matrix)}
%-------------------------------------------------

After substituting summation indices $s\leftarrow k-1-s$ and $l\leftarrow l-s$ we get
\begin{equation}
    \widetilde m_{0,n}^{(k)} = \frac{C_{k-1}^n}{(k-1)!}\sum_{s=0}^{k-1}{\sum_{l=0}^{s}{(-1)^lC_k^l(s-l)^{k-1-n}}}\,.
\end{equation}
We now show by induction over $k$ that $\widetilde m_{0,n}^{(k)}=\delta_{n,0}$ for all $n=0,...,k-1$:
for $k=1$, $\widetilde m_{0,n}^{(k)} = 1 = \delta_{0,0}$ is trivial.
Now, assume $\widetilde m_{0,n}^{(k)}=\delta_{n,0}$ for some $k$.

Starting from the induction assumption
\begin{equation}
    \label{eq:binomial_induction}
    \sum_{s=0}^{k-1}{\sum_{l=0}^{s}{(-1)^lC_k^l(s-l)^{k-1-n}}} = (k-1)!\,\delta_{n,0}
\end{equation}
% for $n=0,...,k-1$,
we now show that $\widetilde m_{0,n}^{(k+1)}=\delta_{n,0}$ for $n=0,...,k-1$.
If not indicated otherwise, we use well-known binomial sum properties, as summarized in e.g.~\cite[0.15]{jeffrey2007table}.
As a first step, we use the property $C_{k+1}^l=C_k^l+C_k^{l-1}$ and split the terms in the double sum to obtain
\begin{equation}
    \sum_{s=0}^k{\sum_{l=0}^{s}{(-1)^lC_{k+1}^l(s-l)^{k-n}}} = T_1 + T_2 + T_3 + T_4\,,
\end{equation}
with
\begin{align}
    \label{eq:T1}
  & T_1 = \sum_{s=0}^{k-1}{\sum_{l=0}^{s}{(-1)^lC_k^l(s-l)^{k-n}}}\,, \\
  \label{eq:T2}
  & T_2 = \sum_{l=0}^{k}{(-1)^lC_k^l(k-l)^{k-n}}\,, \\
  \label{eq:T3}
  & T_3 = \sum_{s=0}^{k}{\sum_{l=0}^{s-1}{(-1)^lC_k^{l-1}(s-l)^{k-n}}}\,, \\
  \label{eq:T4}
  & T_4 = \sum_{s=0}^{k}{(-1)^sC_k^{s-1}(s-s)^{k-n}}\,.
\end{align}
It is easy to see from \eqref{eq:binomial_induction} that $T_1 = (k-1)!\,\delta_{n,1}$.
Furthermore, $T_2=k!\,\delta_{n,0}$.
For $T_4$, we have
\begin{equation}
\begin{aligned}
    T_4 &= 0^{k-n}\sum_{s=0}^k{(-1)^sC_k^{s-1}} = \delta_{n,k}\sum_{s=0}^{k-1}{(-1)^{s+1}C_k^s} \\
        &= -\delta_{n,k}(-1)^{k-1}C_{k-1}^{k-1} = \delta_{n,k}(-1)^k\,.
\end{aligned}
\end{equation}
Finally, we need $T_3$, which is the most complicated term:
\begin{equation}
\begin{aligned}
    T_3 &= \sum_{s=0}^{k}{\sum_{l=0}^{s-1}{(-1)^lC_k^{l-1}(s-l)^{k-n}}} \\
        &= \sum_{s=0}^{k-1}{\sum_{l=0}^{s-1}{(-1)^{l+1}C_k^l(s-l)^{k-n}}} \\
        &= -\sum_{s=0}^{k-1}{\sum_{l=0}^s{(-1)^lC_k^l(s-l)^{k-n}}} + \sum_{s=0}^{k-1}C_k^s0^{k-n} \\
        &= -(k-1)!\,\delta_{n,1} + \delta_{n,k}(-1)^{k-1}\,,
\end{aligned}
\end{equation}
where the first equality comes from index shifting ($s\leftarrow s-1$ and $l\leftarrow l-1$), and the last one uses the induction assumption.
In total, we obtain:
\begin{equation}
\begin{aligned}
    &T_1 + T_2 + T_3 + T_4 \\
    &= (k-1)!\,\delta_{n,1} + k!\,\delta_{n,0} \\
    &\quad-(k-1)!\,\delta_{n,1} - \delta_{n,k}(-1)^k + \delta_{n,k}(-1)^k \\
    &= k!\,\delta_{n,0}\,,
\end{aligned}
\end{equation}
which is equivalent to $\widetilde m_{0,n}^{(k+1)}=\delta_{n,0}$ for $n=0,...,k-1$ by definition of $\widetilde m_{j,n}^{(k+1)}$.

What remains is the case $n=k$:
\begin{equation}
    \sum_{s=0}^k{\sum_{l=0}^s{(-1)^lC_{k+1}^l(s-l)^0}} = \sum_{s=0}^k{(-1)^sC_k^s} = 0\,,
\end{equation}
which concludes the proof.

%-------------------------------------------------
\subsubsection{Proof of \eqref{eq:jac_A_omega} (Jacobian of $\Exp(-\lambda\vect{d})$ multiplied by a vector)}
%-------------------------------------------------

\begin{equation}
\begin{aligned}
    &\frac{\partial}{\partial \vect{d}}\Exp(-\lambda\vect{d})\vect{\omega}
        = \left. \frac{\partial}{\partial \vect{\delta}}\Exp(-\lambda(\vect{d}+\vect{\delta})) \vect{\omega} \right|_{\vect{\delta}=0}\\
    &= \left .\frac{\partial}{\partial \vect{\delta}}\left(\Exp(-\lambda\vect{d})\Exp(-\Jr(-\lambda\vect{d})\lambda\vect{\delta})\vect{\omega}+\mathcal{O}(\vect{\delta}^2)\right) \right|_{\vect{\delta}=0}\\
    % &= \Exp(-\lambda\vect{d})\left .\frac{\partial}{\partial \vect{\delta}}\Exp(-\Jr(-\lambda\vect{d})\lambda\vect{\delta})\vect{\omega} \right|_{\vect{\delta}=0}\\
    &= -\lambda\Exp(-\lambda\vect{d}) \left. \frac{\partial}{\partial \vect{\delta}}(\Exp(\vect{\delta})\vect{\omega}) \right|_{\vect{\delta}=0} \Jr(-\lambda\vect{d})\\
    &= \lambda\Exp(-\lambda\vect{d})\vect{\omega}_\wedge \Jr(-\lambda\vect{d})\,.\\
\end{aligned}
\end{equation}
For the second equality we have used the right Jacobian property \eqref{eq:jrprop2}. To obtain the last equality, note that
\begin{equation}
    \left.\frac{\partial \Exp(\vect{\delta})\vect{\omega}}{\partial \vect{\delta}}\right|_{\vect{\delta}=0}=-\vect{\omega}_\wedge\,.
\end{equation}

%-------------------------------------------------
\subsubsection{Proof of \eqref{eq:jac_omegadot_d} (Jacobian of acceleration)}
%-------------------------------------------------

We show by induction that the following two formulas are equivalent for $l=j+2, \dots, k$:
\begin{align}
    \label{eq:domegadot_dj1}
    \ddj{\dot{\vect{\omega}}^{(l)}} &= -\dot\lambda_{l-1}\mat{D}_{l-1}\ddj{\vect{\omega}^{(l)}} + \mat{A}_{l-1}^\top\ddj{\dot{\vect{\omega}}^{(l-1)}}\,, \\
    \label{eq:domegadot_dj2}
    \ddj{\dot{\vect{\omega}}^{(l)}} &= \mat{P}_j^{(l)} \ddj{\dot{\vect{\omega}}^{(j+1)}} - (\vect{s}^{(l)}_j)_\wedge\ddj{\vect{\omega}^{(l)}}\,,
\end{align}
where we define $\mat{P}_j^{(l)}$ and $\vect{s}_j^{(l)}$ as
\begin{align}
    \mat{P}_j^{(l)} = \left(\prod_{m=j+1}^{l-1}{A_m}\right)^\top\quad &\Rightarrow \quad \mat{P}_j^{(k)} = \mat{P}_j\,, \\
    \vect{s}_j^{(l)} = \sum_{m=j+1}^{l-1}{\dot{\lambda}_m\mat{P}_m\vect{d}_m} \quad &\Rightarrow \quad \vect{s}_j^{(k)} = \vect{s}_j\,.
\end{align}
The case $l=k$ then is the desired results.
For $l=j+2$, we easily see that
\begin{align}
    -\dot{\lambda}_{l-1}\mat{D}_{l-1} &= -(\dot{\lambda}_{j+1}\vect{d}_{j+1})_\wedge = -(\vect{s}_j^{(l)})_\wedge\,, \\
    \mat{A}_{l-1}^\top &= \mat{A}_{j+1}^\top = P_j^{(j+2)} = P_j^{(l)}\,,
\end{align}
which together implies the equivalence of \eqref{eq:domegadot_dj1} and \eqref{eq:domegadot_dj2}.
Now, assume the equivalence holds for some $l\in\{j+2,\dots,k-1\}$ and note that
\begin{align}
    \mat{A}_l^\top \mat{P}_j^{(l)} &= \mat{P}_j^{(l+1)}\,, \\
    \mat{A}_{l}^\top(s_j^{(l)})_\wedge &= \left((s_j^{(l+1)})_\wedge - \dot\lambda_{l}\mat{D}_{l}\right)A_{l}^\top\,.
\end{align}
Then, starting from \eqref{eq:domegadot_dj1} and using the induction assumption, we obtain
\begin{equation}
\begin{aligned}
    \ddj{\dot{\vect{\omega}}^{(l+1)}} &= -\dot\lambda_{l}\mat{D}_{l}\ddj{\vect{\omega}^{(l+1)}} \\
    &\quad+ \mat{A}_{l}^\top\left(P_j^{(l)}\ddj{\dot{\vect{\omega}}^{(j+1)}} - (\vect{s}_j^{(l)})_\wedge\ddj{\vect{\omega}^{(l)}}\right) \\
    &= -\dot\lambda_{l}D_{l}\ddj{\vect{\omega}^{(l+1)}} + P_j^{(l+1)}\ddj{\dot{\vect{\omega}}^{(j+1)}} \\
    &\quad - \left((s_j^{(l+1)})_\wedge - \dot\lambda_{l}\mat{D}_{l}\right)A_l^\top\ddj{\vect{\omega}^{(l)}}\,.
\end{aligned}
\end{equation}
The first and the last summand cancel, and what remains is \eqref{eq:domegadot_dj2} for $l+1$, which concludes the proof.

\end{document}